\def\BibTeX{{\rm B\kern-.05em{\sc i\kern-.025em b}\kern-.08em
    T\kern-.1667em\lower.7ex\hbox{E}\kern-.125emX}}
\newtheorem{assumption}{Assumption}[section]
\newenvironment{breakablealgorithm}
  {
    \begin{center}
      \refstepcounter{algorithm}
      \hrule height.8pt depth0pt \kern2pt
      \parskip 0pt
      \renewcommand{\caption}[2][\relax]{
        {\raggedright\textbf{\fname@algorithm~\thealgorithm} ##2\par}%
        \ifx\relax##1\relax 
          \addcontentsline{loa}{algorithm}{\protect\numberline{\thealgorithm}##2}%
        \else 
          \addcontentsline{loa}{algorithm}{\protect\numberline{\thealgorithm}##1}%
        \fi
        \kern2pt\hrule\kern2pt
     }
  }
  {
     \kern2pt\hrule\relax
   \end{center}
  }
\newcommand{\cmark}{\ding{51}}%
\newcommand{\xmark}{\ding{55}}%
\newcommand{\nop}[1]{}
\newcommand{\Nan}[1]{{\color{purple}\textbf{Nan: }#1}}
\newcommand{\memo}[1]{{\color{purple}(#1)}}
\newcommand{\Wendy}[1]{{\color{brown}\textbf{Wendy: }#1}}
\newcommand{\G}{\mathcal{G}}
\newcommand{\V}{\mathcal{V}}
\newcommand{\E}{\mathcal{E}}
\newcommand{\F}{\mathcal{F}}
\newcommand{\N}{\mathcal{N}}
\newcommand{\D}{\mathbf{D}}
\newcommand{\A}{\mathbf{A}}
\newcommand{\mH}{\mathbf{H}}
\newcommand{\W}{\mathbf{W}}
\newcommand{\w}{\mathbf{w}}
\newcommand{\Z}{\mathbf{Z}}
\newcommand{\X}{\mathbf{X}}
\newcommand{\Y}{\mathbf{Y}}
\newcommand{\mS}{s}
\newcommand{\bbP}{\mathbb{P}}
\newcommand{\bbE}{\mathbb{E}}
\newcommand{\inter}{H_{\text{inter}}}
\newcommand{\intra}{H_{\text{intra}}}
\newcommand{\EB}{B}
\newcommand{\Nzero}{N_0}
\newcommand{\None}{N_1}
\newcommand{\Ntotal}{N}
\newcommand{\SDZ}{\sigma_\Z}
\newcommand{\gammaFdot}{\gamma'_\F}
\newcommand{\theoitem}{\frac{\sqrt{\Nzero\,\None}}{{\SDZ}_{:, m}\Ntotal^2}}
\newcommand{\DAD}{\hat{\mathbf{D}}^{-\frac{1}{2}}\hat{\mathbf{A}} \hat{\mathbf{D}}^{-\frac{1}{2}}}
\newcommand{\jsold}{\mathrm{js}^i_t}
\newcommand{\loss}{\boldsymbol{\ell}}
\newcommand{\pbc}{\rho_{X,s}}
\newcommand{\pbcz}{\rho_{\Z,s}}
\newcommand{\ourmethod}{$\text{F}^2$GNN}
\newcommand{\server}{\mathcal{S}}
\newtheorem{theorem}{Theorem}
\newtheorem{lemma}[theorem]{Lemma}
\newtheorem{definition}[theorem]{Definition}
\title{Equipping Federated Graph Neural Networks with
Structure-aware Group Fairness
\nop{\thanks{\textit{\underline{Citation}}: 
\textbf{Authors. Title. Pages.... DOI:000000/11111.}}} 
}
\author{
  Nan Cui, Xiuling Wang, Wendy Hui Wang, Violet Chen, Yue Ning \\
  Stevens Institute of Technology \\
  Hoboken, NJ\\
  \texttt{\{ncui, xwang193, hwang4, vchen3, yue.ning\}@stevens.edu} \\
}
\begin{document}

\newtheoremstyle{mytheoremstyle}
    {\topsep} 
    {\topsep} 
    {} 
    {} 
    {\bfseries} 
    {} 
    {.5em} 
    {} 

\newtheorem*{mylemma}{Lemma} 
\newtheorem*{mytheorem}{Theorem}

\maketitle
\thispagestyle{plain}
\pagestyle{plain}

\begin{abstract}
Graph Neural Networks (GNNs) have been widely used for various types of graph data processing and analytical tasks in different domains. Training GNNs over centralized graph data can be infeasible due to privacy concerns and regulatory restrictions. Thus, federated learning (FL) becomes a trending solution to address this challenge in a distributed learning paradigm.  
However, as GNNs may inherit historical bias from training data and lead to discriminatory predictions, the bias of local models can be easily propagated to the global model in distributed settings. This poses a new challenge in mitigating bias in federated GNNs. To address this challenge, we propose \ourmethod, a \underline{F}air \underline{F}ederated \underline{G}raph \underline{N}eural
\underline{N}etwork, that enhances group fairness of federated GNNs. As bias can be sourced from both data and learning algorithms, \ourmethod\ aims to mitigate both types of bias under federated settings. 
First, we provide theoretical insights on the connection between data bias in a training graph and  statistical fairness metrics of the trained GNN models. Based on the theoretical analysis, we design \ourmethod\ which contains two key components: a {\em fairness-aware local model update scheme} that enhances group fairness of the local models on the client side, and a {\em fairness-weighted global model update scheme} that takes both data bias and fairness metrics of local models into consideration in the aggregation process. 
We evaluate \ourmethod\ empirically versus a number of baseline methods, and demonstrate that \ourmethod\ outperforms these baselines in terms of both fairness and model accuracy. 
The code is publicly accessible via the following link: \href{https://github.com/yuening-lab/F2GNN}{https://github.com/yuening-lab/F2GNN}.
\end{abstract}

\keywords{Graph Neural Networks \and Federated Learning \and Group Fairness}
\section{Introduction}

Graph neural networks (GNNs) have emerged as a formidable tool for generating meaningful node representations~\cite{grover2016node2vec, HamiltonYL17, KipfW17} and making accurate predictions on nodes by leveraging graph topologies~\cite{YuanYWLJ21, NouranizadehMR21}. GNNs have been studied in many applications such as knowledge graph completion~\cite{DHamaguchiOSM17}, biology~\cite{LiuYCJ20, 0059WLLZOJ22}, and recommender systems~\cite{Fan0LHZTY19, ChenWLYWY21}. 
A substantial portion of graph data originates from human society and various aspects of people's lives. Such data inherently contain sensitive information, susceptible to discrimination and prejudice.
In many high-stakes domains such as criminal justice~\cite{abs-1901-10002}, healthcare~\cite{rajkomar2018ensuring}, and financial systems~\cite{FeldmanFMSV15, abs-1909-12946}, ensuring fairness in the results of graph neural networks with respect to certain minority groups of individuals is of profound societal importance. 
These types of data are frequently distributed across multiple machines. Thus, challenges are present when applying machine learning tasks that could benefit from utilizing global data. To overcome these obstacles, federated learning has emerged as a promising solution. By leveraging a federated learning framework, organizations can collaborate on a task without compromising the confidentiality of their data. This enables the creation of a shared model while maintaining data privacy, allowing for secure and efficient cooperation among different organizations and local clients.

\nop{Graph neural networks (GNNs) have emerged as a formidable tool for generating meaningful node representations~\cite{KipfW17} and making accurate predictions on nodes by leveraging graph topology~\cite{YuanYWLJ21}.
Despite its success in graph analytics, training GNNs over centralized graph data has been limited in practice due to privacy concerns.  
Federated learning, a trending distributed learning paradigm, has emerged as a promising solution.
Several federated GNNs~\cite{XieMXY21, yao2022fedgcn} have been designed recently.}  

As a result, several works have emerged in the field of federated learning combined with GNNs~\cite{he2021fedgraphnn, XieMXY21, yao2022fedgcn} to address various challenges. He \textit{et al.}~\cite{he2021fedgraphnn} present an open benchmark system for federated GNNs that facilitates research in this area. Xie \textit{et al.}~\cite{XieMXY21} propose a graph clustered federated learning framework that dynamically identifies clusters of local systems based on GNN gradients. Yao \textit{et al.}~\cite{yao2022fedgcn} leverage federated learning to train Graph Convolutional Network (GCN) models for semi-supervised node classification on large graphs, achieving fast convergence and minimal communication. Nonetheless, there is also research indicating that predictions of GNNs (over centralized data) can be unfair and have undesirable discrimination \cite{DDongLJL22, hussain2022adversarial}.
Under the federated learning setting, the bias in the local GNN models can be easily propagated to the global model. 
While most of the prior works \cite{ezzeldin2021fairfed, DuXWT21, qi2022fairvfl} have primarily focused on debiasing federated learning models trained on non-graph data, none of them have studied federated GNNs. 
How to enhance the fairness of GNNs under federated settings remains to be largely unexplored. 

In general, the bias of GNNs can be stemmed from two different sources: (1) bias in  graph data \cite{hussain2022adversarial, KoseS22},
and (2) bias in learning algorithms (e.g., the message-passing procedure of GNNs) \cite{jiang2022fmp, DaiW21}. In this paper, we aim to enhance the group fairness of federated GNNs by mitigating both types of bias. In particular, in terms of data bias, we group the nodes by their sensitive attribute (e.g., gender and race). Then we group the links between nodes based on the sensitive attribute values of the connecting nodes: (1) the {\em intra-group links} that connect nodes belonging  to the same node groups (i.e., the same sensitive attribute value), and (2) the {\em inter-group links} that connect nodes that belong to different node groups. 
Based on the link groups, 
we consider the data bias that takes the form of the {\em imbalanced distribution between inter-group and intra-group links}, which is a key factor to the disadvantage of minority groups by GNNs \cite{hussain2022adversarial}. 
On the other hand, in terms of model fairness, we consider two well-established group fairness definitions: \textit{statistical parity} (SP) \cite{barocas2017fairness,DaiW21} and \textit{ equalized odds} (EO) \cite{jiang2022fmp, WangZDCLD22}. 

In this paper, we propose \ourmethod, one of the first federated graph neural networks that enhances group fairness of both local and global GNN models. 
Our contributions are summarized as follows: 
1) We provide theoretical insights on the connection between data bias (imbalanced distributions between inter-group and intra-group links) and model fairness (SP and EO); 
2) Based on the theoretical findings, we design \ourmethod\ that enhances group fairness of both local and global GNN models by taking both the data bias of local graphs and statistical fairness metrics of local models into consideration during the training of both local and global models; 
3) Through experiments on three real-world datasets, we demonstrate that \ourmethod\ outperforms the baseline methods in terms of both fairness and model accuracy. 
\section{Related Work}

{\bf Group fairness in GNNs.} 
\nop{
Existing studies on fair graph learning can be broadly classified into three categories: (1) {\em pre-processing methods} that remove data bias from the training graph \cite{KoseS22,SpinelliSHU22}; (2) {\em in-processing methods} that incorporate fairness into the training process \cite{jiang2022fmp,DaiW21,ling2023learning}; and (3) {\em hybrid methods} that combine both pre- and in-processing techniques  \cite{HussainCSHLSK22,ling2023learning}. 
In particular, in terms of pre-processing methods, Kose \textit{et al.}~\cite{KoseS22} generate a balanced graph and leverage contrastive learning to learn fair graph representations. Spinelli \textit{et al.}~\cite{SpinelliSHU22} utilize edge dropout to mitigate the impact of homophily and promote fairness in graph representation learning. 
In terms of in-processing methods, Jiang \textit{et al.}~\cite{jiang2022fmp} transform the message-passing training process into a bi-level optimization framework with fairness as a constraint incorporated in the framework. Dai \textit{et al.}~\cite{DaiW21} leverage adversarial debiasing to learn the graph representation that is independent of the sensitive attribute.  Masrour \textit{et al.}~\cite{MasrourWYTE20} introduce a framework combining adversarial network representation learning with supervised link prediction. Wang \textit{et al.}~\cite{WangZDCLD22} utilizes generative adversarial debiasing to mitigate the bias. 
In terms of the hybrid methods, \cite{HussainCSHLSK22,ling2023learning}  perturb the original graphs before utilizing adversarial training to debias the perturbed graphs.
While these works focus on centralized GNNs, none of them have considered GNNs under the federated setting.
} 
Existing studies on fair graph learning can be broadly classified into three categories: data augmentations \cite{zhu2020deep, YouCSW21, luo2022automated}, problem optimization with fairness considerations \cite{KamishimaAAS12, LaclauRCL21}, and adversarial debiasing methods \cite{FisherMPC20, DaiW21, ling2023learning}. 
Data augmentations~\cite{KoseS22, wang2022uncovering} are used to create a balanced graph, followed by contrastive learning to learn fair graph representations. 
Biased edge dropout~\cite{SpinelliSHU22} is proposed to mitigate the impact of homophily and promote fairness in graph representation learning. 
Fair Message Passing (FMP) approach~\cite{jiang2022fmp} uses optimization techniques to transform the message-passing scheme into a bi-level optimization framework.
Adversarial debiasing~\cite{DaiW21} has been studied to mitigate graph bias and takes advantage of a sensitive feature estimator to require less sensitive information. 
Masrour \textit{et al.}~\cite{MasrourWYTE20} propose a post-processing step to generate more heterogeneous links to overcome the filter bubble effect and introduces a framework combining adversarial network representation learning with supervised link prediction.
Hussain \textit{et al.}~\cite{HussainCSHLSK22} inject edges between nodes with different labels and sensitive attributes to perturb the original graphs, followed by an adversarial training framework to debias the perturbed graphs. This work studies inter-group edges between different cases (e.g., different labels and different sensitive groups). However, it does not consider inter/intra-group edges' ratio.
Recent studies have utilized mixing techniques for addressing fairness concerns in graph learning. 
Wang \textit{et al.}~\cite{WangZDCLD22} generate fair views of features and adaptively clamp weights of the encoder to avoid using sensitive-related features. 
Ling \textit{et al.}~\cite{ling2023learning} use data augmentations to create fair graph representations, followed by adversarial debiasing training. However, their theoretical proofs make strong assumptions regarding the loss function being bounded by a fixed constant. All these works are focused on fairness in centralized GNNs, and none of them have considered GNNs under the federated setting.

{\bf Fairness of federated learning.} 
\nop{
Some recent studies have incorporated fairness with federated learning. 
Ezzeldin \textit{et al.}~\cite{ezzeldin2021fairfed} assign weights to local models on the client side based on their fairness scores. 
Du \textit{et al.}~\cite{DuXWT21} utilize kernel reweighing functions to assign reweighing values to each training sample in both the loss function and the fairness constraint.  
Qi \textit{et al.}~\cite{qi2022fairvfl} consider vertical federated learning and leverage adversarial training to debias learned representations. Unlike these works that consider learning models trained over non-graph data, we focus on GNNs and aim to enhance group fairness.
}
In contrast to traditional fairness considerations in machine learning, federated learning has a unique fairness concept, performance fairness, at the device level. This is akin to fair resource allocation \cite{EeB04, EryilmazS06} in wireless networks, which extends the notion of accuracy parity by evaluating the degree of uniformity in device performance. Li \textit{et al.}~\cite{LiSBS20} give a solution to address fairness concerns in federated learning through an optimization objective that minimizes an aggregated reweighted loss, where devices with higher losses are assigned proportionally higher weights. Further, Bi-level optimization schemes~ \cite{00050BS21, lin2022personalized} aim to reduce the variance of test accuracy across devices, thereby increasing federated-specific fairness. In addition to the traditional fairness performance, there is a growing body of research in fair-federated learning \cite{hu2022provably, ezzeldin2021fairfed, zeng2021improving} that focuses on the general fairness definition, such as Ezzeldin \textit{et al.}~\cite{ezzeldin2021fairfed} create a server-side aggregation step to weight local models performing well on fairness measures higher than others, and Du \textit{et al.}~\cite{DuXWT21} use kernel reweighing functions to assign reweighing values to each training sample in both the loss function and fairness constraint. Qi \textit{et al.}~\cite{qi2022fairvfl} present a fair vertical federated learning framework that assembles multiple techniques. They first learn representations from fair-insensitive data stored in some platforms and then send the learned representations to other platforms holding sensitive information. Subsequently, the authors conduct adversarial training with sensitive information to debias the representations. Unlike these works that consider learning models trained over non-graph data, we focus on GNNs and aim to enhance group fairness.
\section{Federated Graph Learning}

{\bf Notations.} Given an undirected graph $\G = (\V, \E, \X)$ that consists of a set of nodes $\V$, edges $\E$, and the node features $\X$, let $\A$ be the adjacency matrix of $\G$. 
We use $\Z$ to denote the node representations (embedding) learned by a GNN model. 
In this paper, we consider node classification as the learning task. We use $y$ and $\hat{y}$ to denote the ground truth and the predicted labels respectively.  

{\bf Federated Graph Learning.} 
In this paper, we consider \textit{horizontal federated learning} (HFL)  setting~\cite{DuSSJZZ22, PengZSBLW22} with $K$ clients $\{C_1, \dots, C_K\}$ and a server $\server$. 
Each client $C_i$ owns its private graph $\G_i$. Different $\G_i$ can have different $\V_i$ and $\A_i$. 
All clients share the same set of node features.  

In the HFL setting, clients do not share their local graphs with the server $\server$ due to various reasons (e.g., privacy protection). Thus the goal of HFL is to optimize the overall objective function while keeping private graphs locally
\begin{equation*}
\arg\min_{\omega} \sum_{i=1}^{K} \mathcal{L}_i(\omega) = \arg\min_{\omega} \sum_{i=1}^{K} \sum_{j=1}^{N_i}\loss_i(v_j, \G_i;\omega),
\end{equation*}
where $\mathcal{L}_i(\omega)$ denotes the loss function parameterized by $\omega$ on client $C_i$, $N_i$ is the number of nodes in the local graph $\G_i$, and $\loss_i(v_j, \G_i;\omega)$ denotes the average loss over the local graph on client $C_i$.

In this paper, we adapt the SOTA FedAvg framework \cite{McMahanMRHA17} to our setting. In FedAvg, only model parameters are transmitted between $\server$ and each client $C_i$. Specifically, during each round $t$, $\server$ selects a subset of clients and sends them a copy of the current global model parameters $\omega_t$ for local training. Each selected client $C_i$ updates the received copy $\omega_t^i$ by an optimizer such as stochastic gradient descent (SGD) for a variable number of iterations locally on its own data and obtains $\omega_t^i$. Then $\server$ collects the updated model parameters $\omega_t^i$ from the selected clients and aggregates them to obtain a new global model $\omega_{t+1}$. Finally, the server broadcasts the updated global model $\omega_{t+1}$ to clients for training in round $t+1$. The training process terminates when it reaches convergence.

\nop{
Thus, each client $C_i$ trains a local model over its local graph and shares the local model parameters with $\server$ at each epoch. 
In particular, at epoch $t$, $\server$ sends the latest global model parameters $\omega_t$ to all the clients. Then each client $C_i$ computes its local model parameters at epoch $t$ (denoted as $\omega_{t}^i$). 
Next, a subset of randomly selected clients  upload the updated model updates $\{\omega_{t+1}^i\}$ to the server. Let $K'$ be the number of these selected clients.  
Upon receiving the model updates, the server aggregates $\{\omega_{t+1}^i\}$ to the global model and computes the global model parameter $\omega_{t+1}$ at epoch $t+1$ as following: 
\begin{equation}\label{eq:omega}
\omega_{t+1}\leftarrow\sum_{i=1}^{K'} p_i\omega_{t+1}^i,
\end{equation}
 where $p_i$ is the same as equation~\eqref{eq:f} and $\sum^{K'}_{i=1} p_i=1$. 
After aggregation, the server sends the updated global model parameters to all clients. We denote the total number of iterations as $T$. This value represents the number of communication exchanges between the server and the clients. The federated learning framework concludes its operation after completing $T$ rounds of communication.\Wendy{Add explanation of when the iteration terminates.}\memo{\checkmark}
}

\section{Group Fairness and Data Bias}

\label{sec: group fairness and data bias}
In this paper, we mainly focus on group fairness. 
In general, group fairness requires that the model output should not have discriminatory effects towards {\em protected groups} (e.g., female) compared with {\em un-protected groups} (e.g., male), where protected and un-protected groups are defined by a sensitive attribute (e.g., gender)~\cite{barocas2017fairness}. 

{\bf Node groups.} We adapt the conventional group fairness definition to our setting. We assume each node in the graph is associated with a \textit{sensitive attribute} $s$ (e.g., gender,  race) \cite{MehrabiMSLG21}. The nodes are  partitioned into distinct subgroups by their values of the sensitive attribute. 
For simplicity, we consider a binary sensitive attribute in this paper. In the following discussions, we  use  $s=0$ and $s=1$ to indicate the protected and un-protected node groups respectively. 


{\bf Inter-group and intra-group edges.} Based on the definition of the protected node groups, we further partition edges in a graph into two groups: 
(1) \textit{inter-group edges} that consist of the edges connecting the nodes that belong to different groups (e.g., the edges between female and male nodes); 
and (2) \textit{intra-group edges} indicate the edges connecting nodes in the same group (e.g., the edges between female and female nodes).

{\bf Data bias in edge distribution.} 
Recent studies have shown that one potential data bias in graphs that can lead to unfairness of GNNs is the imbalanced distribution between the inter-group edges and the intra-group ones in training graphs \cite{hussain2022adversarial,masrour2020bursting,spinelli2021fairdrop}.  Therefore, in this paper, we focus on one particular type of data bias which takes the form of the imbalance between the distribution of inter-group and intra-group edges. 
In particular, 
we define {\textit{ group balance score} (GBS) to measure this type of bias in edge distribution as follows.

    \begin{definition}
    \label{def:edge_balance}
        Given a graph $\G$, let $|\E_{\text{inter}}|$ and $|\E_{\text{intra}} |$ be  the number of inter-group edges  and intra-group edges  in $\G$ respectively. $|\E|= |\E_{\text{intra}}| + |\E_{\text{inter}}|$ denotes the total number of edges.
       \textit{Group balance score} (GBS) (denoted as $\EB$) is defined as follows:
        \begin{equation*}
            \EB= 1 - |\intra - \inter|,
        \end{equation*}
        where $\intra = \frac{|\E_{\text{intra}}|}{|\E|}$ and $\inter = \frac{|\E_{\text{inter}}|}{|\E|}$.
\end{definition}



Intuitively, a higher $B$ value indicates a better balance between the inter-group edges and intra-group ones. $B$ reaches its maximum (i.e., $B=1$) when the graph is perfectly  balanced between the two edge groups (i.e., $|\intra| = |\inter|$). 

{\bf Group fairness of models.}
To measure the group fairness of the target GNN model, we employ two well-established fairness definitions: \textit{ statistical parity} (SP) \cite{barocas2017fairness, KoseS22, DaiW21} and \textit{ equalized odds} (EO) \cite{HardtPNS16, jiang2022fmp, WangZDCLD22}. 
In this paper, we adapt both fairness notations to the node classification task. Formally,
\begin{definition}[Statistical Parity]
    \label{def:sp}
        Statistical parity measures the difference in probabilities of a positive outcome of node groups, which is computed as the follows:  
        \begin{equation*}
            \Delta_{\text{SP}} = |P(\hat{y}=1 | s=0) -P(\hat{y}=1 | s=1)|.
        \end{equation*}
\end{definition}
\begin{definition}[Equalized Odds]
    \label{def:eo}
    Equalized odds measures the difference between the predicted true positive rates of node groups, which is computed as the follows: 
    \begin{equation*}
        \Delta_{\text{EO}} = |P(\hat{y}=1 |y=1,  s=0) -P(\hat{y}=1 | y=1, s=1)|.
    \end{equation*}
\end{definition}
Intuitively, $\Delta_{\text{SP}}$ measures the difference between the positive rates of two different node groups, while $\Delta_{\text{EO}}$ measures the difference between the true positive rates for these two groups. Both measurements of SP and EO are equally important and complementary in terms of fairness in node classification.  We are aware of the existence of other accuracy measurements (e.g., false positive rate). These accuracy measurements will be left for future work. 

In this paper, we consider both \textit{local fairness} and \textit{global fairness}, i.e. the outputs of both local and global models should be fair in terms of statistical parity and equalized odds. As shown in the prior study  \cite{wang2023mitigating}, although local fairness and global fairness do not imply each other, global fairness is theoretically upper-bounded by local fairness for both SP and EO metrics when data follows the IID distribution. Our empirical studies (Section \ref{sc:exp}) will show that \ourmethod can achieve both local fairness and global fairness for non-IID distribution.

\nop{
To measure the group fairness of the target GNN model, we employ two well-established fairness definitions: \textit{ statistical parity} (SP) \cite{barocas2017fairness, KoseS22, DaiW21} and \textit{ equalized odds} (EO) \cite{HardtPNS16, jiang2022fmp, WangZDCLD22}. 

SP measures the difference in probabilities of a positive outcome of node groups as $\Delta_{\text{SP}} = |P(\hat{y}=1 | s=0) -P(\hat{y}=1 | s=1)|$ which reflects the difference between the positive rates of two different node groups. EO measures the difference between the predicted true positive rates of node groups as $ \Delta_{\text{EO}} = |P(\hat{y}=1 |y=1,  s=0) -P(\hat{y}=1 | y=1, s=1)|$ which reflects the difference between the true positive rates for these two groups. 
In this paper, we adapt both fairness notations to node classification tasks. 
Both SP and EO are equally important and complementary in terms of fairness in node classification.  We are aware of the existence of other accuracy measurements (e.g., false positive rate). These accuracy measurements will be left for future work. 

In this paper, we consider both \textit{local  fairness} and \textit{global fairness}, i.e. the outputs of both local and global models should be fair in terms of statistical parity and equalized odds. As shown in a prior study~\cite{wang2023mitigating}, although local fairness and global fairness do not imply each other, global fairness is theoretically upper-bounded by local fairness for both SP and EO metrics when data follows the IID distribution. Our empirical studies (Section \ref{sc:exp}) show that \ourmethod\ can achieve both local fairness and global fairness for non-IID distributions.
}

\section{Connection between Data Bias and Model Fairness}

\label{sec:rela_data_and_model_def}



Data bias has been shown as an important source of model unfairness. In this section, we investigate how the imbalanced distribution between the intra-group and inter-group links impacts the model fairness (in terms of SP and EO). 

Directly examining how data bias can impact SP and EO is challenging. 
Therefore, we take an alternative approach that deals with model fairness from the lens of graph representations. 
Intuitively, group fairness requires that the model outputs should not be distinguishable based on sensitive attributes. As GNNs rely on graph representations to generate the model outputs for the downstream tasks, group fairness can be achieved when the graph representations are invariant to the sensitive attributes \cite{KoseS22}.  Informally, this requires independence between the graph representation and the sensitive attribute. How can the imbalanced distribution between the intra-group and inter-group links impact the correlation between the graph representation and the sensitive attribute? Next, we present our main theoretical analysis of the impact study.

In this paper, we utilize the \textit{Point-Biserial Correlation}~\cite{glass1996statistical}, a special case of the Pearson correlation, to measure the correlation between the graph representation and the sensitive attribute. 
We do not consider other correlation metrics such as the Pearson correlation because the Point-Biserial Correlation accepts dichotomous variables as input.

\begin{definition}[Point-Biserial Correlation Coefficient]
        The point-biserial correlation coefficient is a statistical measure that quantifies the degree of association between a continuous-level variable and a dichotomous variable. In particular, given a dichotomous variable $s$ ($s\in\{0, 1\})$, and a continuous variable $X$, the point-biserial correlation coefficient $\pbc$ between  $X$ and $s$ is defined as:
            \begin{equation*}
            \pbc=\frac{\mu_0 - \mu_1}{\sigma_{X}}\sqrt{\frac{N_0 N_1}{N^2}},
            \end{equation*}
        where $\mu_0$ ($\mu_1$) is the mean value of $X$ that are  associated with $s=0$ ($s=1$), $N_0$ ($N_1$) is the number of samples in the class $s=0$ ($s=1$), $N=N_0+N_1$, and $\sigma_X$ is the standard deviation of $X$.
    \end{definition}
The conventional definition of the point-biserial correlation involves a 1-dimensional variable, $X$. In line with the existing work~\cite{KoseS22}, we extend this concept to incorporate higher-dimensional data, using a node feature matrix $\X$. The node feature matrix is denoted as $\mathbf{X} \in \mathbb{R}^{N \times d}$, where $N$ represents the number of nodes and $d$ denotes the feature dimension. In our work, we define the $\mu_0$ ($\mu_1$) stated in the above definition as the mean of $j$-th dimension of node feature of the nodes possessing a sensitive attribute $s=0$ ($s=1$), where $j \in \{1, ..., d\}$. 
The formal notions of $\mu_0$ and $\mu_1$ can be expressed as $\mu_j^{s=0}$ and $\mu_j^{s=1}$, the details of the calculation of $\mu_0$ and $\mu_1$ can be found Appendix~\ref{app:mu}. 
For convenience, we will use $\mu_0$ ($\mu_1$) and $\mu_j^{s=0}$ ($\mu_j^{s=1}$) interchangeably. After computing the means $\mu_0$ and $\mu_1$ for a specific column $j$, we can calculate the point-biserial correlation following the conventional definition. 

Given a graph $\G$, let $s$ be the sensitive attribute of $\G$ and $\Z$ be the representation (node embeddings) of $\G$. Next, we will present the theoretical analysis of the relationship between the data bias (group balance score) and the correlation $\pbcz$ between $s$ and $\Z$. 
Our theoretical analysis relies on two assumptions regarding the model and the data distribution respectively:
\begin{itemize}
\item {\bf Model assumption}: We assume the  GNN model contains a linear activation function \cite{saxe2015deep, SaxeMG13}.
We note that although our theoretical analysis relies on the assumption of linear activation function, our algorithmic techniques are generic and can be applied to various GNN models, including those with non-linear activation functions. 
\item {\bf Data assumption}:  Let $\mathbf{X}_{j}^{s=0}$ ($\mathbf{X}_{j}^{s=1}$) be the $j$-th feature of those data samples associated with the sensitive attribute $s=0$ ($s=1$). By following the prior works \cite{jiang2022fmp,ling2023learning}, we assume 
the data values of $\X_j^s$ follows a Gaussian distribution $\mathcal{N}(\mu_j^s, \sigma_j^s)$, where $\mu_j^s$ and $\sigma_j^s$ are the mean and variance of $\mathbf{X}_{j}^s$. 
\end{itemize}


\nop{
\begin{assumption}
    \label{asmp:data distribution}
    Given a graph $\G$, we follow  \cite{jiang2022fmp,ling2023learning} and assume the node attributes follow a Gaussian Mixture Model $GMM(\mu_0, \Sigma_0,\mu_1, \Sigma_1)$. 
    Let $x_i \in \X_{:,m}$ denote a node feature element with sensitive attribute $s \in \{0,1\}$, where $x_i$ is i.i.d. as per a Gaussian distribution $\mathcal{GN}(\mu_s, \Sigma_s)$, where $\mu_s$ and $\Sigma_s$ represent distinct mean vectors and covariance matrices for each $s$.
    \Wendy{I don't understand this sentence.}\memo{Does it look better now?} \Wendy{Still don't understand. Need to discuss.}
    \Wendy{Are $\mu_0$ and $\mu_1$ generated from node features or node embeddings?} \Nan{Yes. Besides that we use an assumption to ignore the non-linear function and then derive that the node embedding also conforms to a GMM model.}
\end{assumption}
}


Based on the assumptions, we derive the following lemma:
\begin{lemma}
    \label{lemma:rpb}
    Given a graph $\G$ and a GNN model that satisfies the data and model assumptions respectively, the Point-biserial correlation (denoted as $\pbcz$) between the graph representation  $\Z$ and the sensitive attribute $\mS$ is measured as the following:
 \begin{equation*}
        \pbcz = (\Nzero \mu_0 - \None \mu_1)(\intra -\inter)\cdot\frac{\sqrt{\Nzero\,\None}}{\SDZ\Ntotal^2},
 \end{equation*}
where $\intra$ and $\inter$ are defined by Definition \ref{def:edge_balance}, $\Nzero$ $(\None)$ is the number of nodes associated with $s=0$ $(s=1)$, $\SDZ$ is the standard deviation of $\Z$, $\Ntotal$ is the total number of nodes of $\G$, and $\mu_0$ ($\mu_1$) is the mean value of nodes that are associated with $s=0$ ($s=1$).
\end{lemma}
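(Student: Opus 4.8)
The plan is to begin from the definition of the point-biserial correlation coefficient, but applied to the representation $\Z$ in place of the raw feature, so that
$$\pbcz = \frac{\mu_0^\Z - \mu_1^\Z}{\SDZ}\sqrt{\frac{\Nzero\None}{\Ntotal^2}},$$
where $\mu_0^\Z$ and $\mu_1^\Z$ denote the per-group mean embeddings taken over the nodes with $s=0$ and $s=1$ respectively. Matching this against the claimed identity, the factors $\SDZ$ and $\sqrt{\Nzero\None}/\Ntotal$ are already in place, so the whole lemma collapses to proving the single numerator identity
$$\mu_0^\Z - \mu_1^\Z = \frac{(\Nzero\mu_0 - \None\mu_1)(\intra - \inter)}{\Ntotal}.$$
I would state this reduction up front and spend the remainder of the argument on the numerator.

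Next I would invoke the model assumption. With a linear activation the GNN layer acts as a linear propagation $\Z = \DAD\,\X\,\W$, which per feature column reduces to a scalar-weighted neighborhood aggregation. Because the point-biserial coefficient normalizes the numerator $\mu_0^\Z-\mu_1^\Z$ by $\SDZ$, any column-wise scalar coming from $\W$ rescales numerator and denominator identically and cancels; I would make this cancellation explicit so that the weight matrix drops out and I may analyze the unweighted aggregation $\Z = \DAD\,\X$ on a single representative column.

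The core computation then evaluates the two group means under the data assumption $\bbE[\X_u]=\mu_{s_u}$. Writing $\Z_v = \sum_{u}(\DAD)_{vu}\,\X_u$ and taking expectations, each node's mean embedding splits into a contribution from its same-group (intra) neighbors, carrying mean $\mu_{s_v}$, and a contribution from its opposite-group (inter) neighbors, carrying the other mean. Averaging separately over the group-$0$ and group-$1$ nodes and rewriting the aggregated intra-/inter-neighbor weights through the global edge counts via $\intra=|\E_{\text{intra}}|/|\E|$ and $\inter=|\E_{\text{inter}}|/|\E|$, I expect the group means to assume the symmetric form $\mu_0^\Z = (\Nzero\mu_0\,\intra + \None\mu_1\,\inter)/\Ntotal$ and $\mu_1^\Z = (\None\mu_1\,\intra + \Nzero\mu_0\,\inter)/\Ntotal$. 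Subtracting these two immediately yields $(\Nzero\mu_0 - \None\mu_1)(\intra-\inter)/\Ntotal$, which is exactly the numerator identity and hence completes the lemma.

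The main obstacle is precisely this last passage from local to global quantities: the raw aggregation is governed by per-node degrees and the symmetric normalization $\DAD$, whereas the target is written entirely through the global ratios $\intra,\inter$ and the group sizes $\Nzero,\None,\Ntotal$. Producing the group-size weights $\Nzero,\None$ together with a single global normalization by $\Ntotal$ (rather than by individual node degrees) requires a homogeneity, or mean-field, assumption on the within- and across-group connection structure, and the counting must be done carefully to land exactly on the stated coefficients. I would isolate that assumption explicitly and sanity-check the two boundary regimes — a purely intra-group graph ($\inter=0$), which should preserve the separation between the groups, and a purely inter-group graph ($\intra=0$), which should flip its sign — to confirm that the coefficients are correct.
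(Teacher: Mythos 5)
Your proposal follows essentially the same route as the paper's own proof: it reduces the lemma to the group-mean difference in the point-biserial formula, linearizes the GCN to $\DAD\,\X$, splits each node's aggregated embedding into intra- and inter-group neighbor contributions, and arrives at exactly the paper's intermediate expressions $\mu'_0 = (\Nzero\mu_0\,\intra + \None\mu_1\,\inter)/\Ntotal$ and $\mu'_1 = (\None\mu_1\,\intra + \Nzero\mu_0\,\inter)/\Ntotal$ before subtracting. The ``main obstacle'' you flag --- passing from per-node, degree-normalized aggregation to the global ratios $\intra,\inter$ and group sizes, which needs an implicit homogeneity assumption --- is real, but the paper makes the same leap silently, so your treatment is if anything more candid than the original.
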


The proof of Lemma \ref{lemma:rpb} can be found in Appendix~\ref{proof:lemma}. Intuitively, when $\pbcz=0$ (i.e., the graph representation is independent of the sensitive attribute), either $\Nzero \mu_0 = \None \mu_1$ or $\intra = \inter$. Remark that both $\Nzero$ and $\None$ are fixed values for a given graph. Therefore, we derive the following important theorem: 

\begin{theorem}
\label{theorm:rpb}
Let $\G$ be a graph that satisfies the given data assumption, and $\Z$ be the representation of $\G$ learned by a GNN model that satisfies the given model assumption. 
Then the Point-biserial correlation $\pbcz$ between the graph representation $\Z$ and the sensitive attribute $\mS$ is positively correlated/monotonically increasing with $|\intra - \inter|$. In particular, $\pbcz = 0$ when $\intra = \inter$. 
\end{theorem}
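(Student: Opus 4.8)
The plan is to derive the theorem directly from the closed-form expression for $\pbcz$ established in Lemma \ref{lemma:rpb}, reducing the monotonicity claim to a structural observation about which quantities in that formula actually vary with the edge distribution. First I would take absolute values on both sides of the lemma's identity, obtaining
\begin{equation*}
|\pbcz| = |\Nzero\mu_0 - \None\mu_1|\cdot|\intra - \inter|\cdot\frac{\sqrt{\Nzero\,\None}}{\SDZ\,\Ntotal^2}.
\end{equation*}
The strategy is then to isolate the factor $|\intra - \inter|$ and show that everything multiplying it is a nonnegative scalar that does not depend on the balance between inter- and intra-group edges, so that the sign and the ordering are driven entirely by $|\intra - \inter|$.

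Next I would examine the remaining factors one at a time. For a fixed graph $\G$ with a fixed sensitive-attribute partition, the node counts $\Nzero$, $\None$, and $\Ntotal = \Nzero + \None$ are constants, and the class means $\mu_0$, $\mu_1$ are determined by the node features rather than by how edges are grouped; hence $|\Nzero\mu_0 - \None\mu_1|\sqrt{\Nzero\None}/\Ntotal^2$ is a fixed nonnegative quantity. Writing this as $c \ge 0$, the relation reduces to $|\pbcz| = (c/\SDZ)\,|\intra - \inter|$, which is a nondecreasing (indeed, affine) function of $|\intra - \inter|$ once the denominator is controlled. The special case is then immediate: substituting $\intra = \inter$ into the signed expression of Lemma \ref{lemma:rpb} makes the factor $(\intra - \inter)$ vanish, so $\pbcz = 0$ exactly, which establishes the second assertion of the theorem.

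The main obstacle I anticipate is justifying that the denominator $\SDZ$ — the standard deviation of the learned representation $\Z$ — may be treated as a positive constant (or at least as an order-preserving factor) as the signed imbalance $\intra - \inter$ varies, since $\Z$ is itself produced by the GNN from the grouped graph. To handle this cleanly I would invoke the linear-activation model assumption to argue that $\SDZ$ is governed by the feature distribution and the scale of the propagation operator $\DAD$, rather than by the signed imbalance $\intra - \inter$, so that it contributes only a positive multiplicative factor and cannot flip the sign or reverse the ordering. With $\SDZ > 0$ absorbed into the nonnegative coefficient, $|\pbcz|$ is proportional to $|\intra - \inter|$, yielding both the monotone-increase statement and, at $\intra = \inter$, the vanishing of $\pbcz$.
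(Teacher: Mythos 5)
Your proposal is correct and follows essentially the same route as the paper: both start from the closed form in Lemma~\ref{lemma:rpb}, strip the sign (you via absolute values, the paper via squaring and a trivial derivative check on $d^2$), and treat the remaining factors --- including $\SDZ$ --- as fixed nonnegative constants so that $|\pbcz|$ scales with $|\intra - \inter|$ and vanishes at $\intra = \inter$. The dependence of $\SDZ$ on the edge distribution that you flag as the main obstacle is also present in the paper's argument, which simply assumes those terms fixed, so your treatment is if anything slightly more explicit on that point.
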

The details of the proof for Theorem~\ref{theorm:rpb} can be found in Appendix~\ref{proof:theorem}.
As Theorem~\ref{theorm:rpb} suggests, the imbalanced distribution between inter-group and intra-group links is one of the key factors to the correlation between the graph representation and the sensitive attribute. 
As the model predictions largely rely on the graph representations, the unfairness in the model predictions can be largely reduced if the graph representation is independent of the sensitive attribute. Therefore, a graph with a more (less, resp.) balanced distribution between inter-group and intra-group links  should lead to a more (less, resp.) fair model. We will follow this insight and design \ourmethod.


    \nop{
    \begin{theorem}
    \label{theorm:rpb}
    Given a graph $\G(\V, \E, A)$ and a GNN model that satisfy both assumptions respectively, the graph representation $\Z$ is independent from the sensitive attribute $\mS$ when $\intra = \inter$.
    \end{theorem}
}

    \nop{
In our discourse thus far, we have built a critical link between data bias and model bias. In a rudimentary description, the total correlation, $\pbcz$, can be delineated as follows:
 \begin{equation}
        \pbcz = (\Nzero \mu_0 - \None \mu_1)(\intra -\inter)\cdot\frac{\sqrt{\Nzero\,\None}}{\SDZ\Ntotal^2},
        \end{equation}
where $\SDZ$ signifies the standard deviation of the representations, and $\Ntotal$ represents the total dataset size. 
}

\section{The Proposed Method: \ourmethod}
\label{sec:framework}

\begin{figure}[t]
      \centering
      \includegraphics[width=1\textwidth]{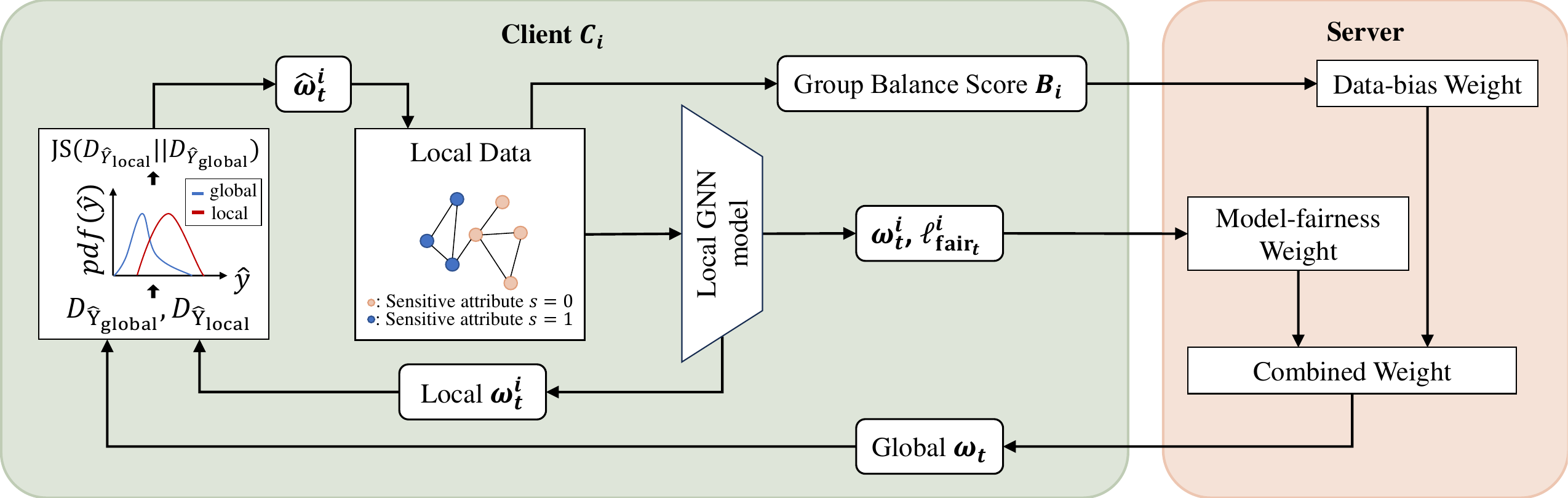}
      \caption{\small An overview of \ourmethod. In each iteration, client $C_i$ receives the global model, calculates the Jensen-Shannon (JS) divergence $\jsold$, and updates the local model $\hat{\omega}^i_t$. Local data is processed through a 2-layer GCNs, and the model is trained over several epochs. The updated model $\omega^i_t$ and the group balance score $B_i$ are uploaded to the server. The server then computes the data-bias and model-fairness weights, integrates them, and uses the combined weight to aggregate the local models and update the global model $\omega_t$. 
      }
      \label{fig:overview}
\end{figure}

In a nutshell, \ourmethod\ follows the same principles as FedAvg \cite{McMahanMRHA17}: in each round $t$, the server $\server$ uniformly samples a set of clients. Each selected client $C_i$ receives a copy of the current global parameters $\omega_{t}$, and obtains its local model update $\omega_{t}^i$ by learning over its local graph $\G_i$ with $\omega_{t}$ as the initialized model parameters. Then each client $C_i$ sends $\omega_{t}^i$ back to $\server$. $\server$ aggregates all the received local model updates on $\omega_{t}$ and obtain $\omega_{t+1}$. To equip both local fairness and global fairness with the federated learning framework, 
{\ourmethod} comprises two main components: 
\begin{itemize}
\item \textit{Client-side fairness-aware local model update scheme}: each client $C_i$ obtains $\omega_{t}^i$ by adding local fairness as a penalty term to the loss function of its local model. 
\item \textit{Server-side fairness-weighted global model update scheme}: When $\server$ aggregates all local model updates from the clients, it takes both the fairness metrics of local models (e.g., SP or EO) and data bias of local graphs as aggregation weights. 
\end{itemize}
Figure~\ref{fig:overview} illustrates the framework of \ourmethod, and detailed steps of our algorithm are summarized in Algorithm~\ref{alg:alg1}. Next, we present the details of each component. 


\subsection{Client-side Fairness-aware Local Model Update}

Intuitively, the design of the local model update scheme should serve two objectives: 1) it ensures the convergence of not only the local models on the client side but also the global model on the server side, especially when the data across different clients is non-iid, and 2) it equips fairness (both SP and EO) with the local models by seamlessly incorporating the fairness constraint into the training process of local model updates. To achieve these objectives, we utilize two key techniques: (i) model interpolation by leveraging the Jensen-Shannon (JS) divergence~\cite{Nielsen21}, and (ii) adding  fairness as a penalty term to the loss function of the local models. Next, we discuss the details of each technique.
 

{\bf JS  divergence between global and local models.} 
One challenge of GNN training under federated setting is that the local data on each client may not be IID \cite{zhao2018federated, LiHYWZ20}. The difference in data distribution can lead to a contradiction between minimizing local empirical loss and reducing global empirical loss and thus prevent the global model  to reach convergence. To tackle the non-IID challenge, inspired by Zheng \textit{et al.}~\cite{ZhengZCWWZ21}, \ourmethod\ measures the difference between the local and the global model, and takes such difference into consideration when updating local model parameters. 


Specifically, let $\omega_{t}$ and $\omega^i_{t-1}$ be the global model parameters at epoch $t$ and the local model parameters maintained by the client $C_i$ at epoch $t-1$, respectively. 
Let $D_{\hat{\Y}_{\text{global}}}$ and $D_{\hat{\Y}_{\text{local}}}$ be the distribution of the  node labels in client $C_i$'s local graph predicted by the global model (with $\omega_{t}$ as parameters) and local model (with $\omega^i_{t-1}$  as parameters). 
Upon receiving  $\omega_{t}$, the client $C_i$ calculates the Jensen-Shannon (JS) divergence  (denoted as $\jsold$) between $D_{\hat{\Y}_{\text{global}}}$ and $D_{\hat{\Y}_{\text{local}}}$ as follows:
\begin{equation*}
\jsold=\mathrm{JS}(D_{\hat{\Y}_{\text{global}}}||D_{\hat{\Y}_{\text{local}}}).
\end{equation*}

Intuitively, a higher $\jsold$ value indicates a greater difference between the global model and the local one. Next, the client $C_i$ computes the initialized value of its local model update $\omega^i_t$ (denoted as $\hat\omega^i_t$) as follows: 
\begin{equation}
\label{eqn:client-local-model-update}
    \hat\omega^i_t \leftarrow (1-\jsold) \cdot \omega^i_{t-1} + \jsold \cdot \omega_{t}.
\end{equation}
Following Eqn.~\eqref{eqn:client-local-model-update},  when the global model largely varies from the local model, the global model $\omega_{t}$ is considered more important than the local model $\omega^i_t$. Consequently, $\omega^i_t$ will be closer to $\omega_{t}$.  This will expedite the global model to move towards the global stationary point and reach convergence. 

{\bf Fairness penalty term.} To equip fairness with the local models, 
we focus on a widely-used in-process approach, where fairness criteria are introduced during the training process, typically by adding a fairness constraint or penalty to the formulation of their objective function \cite{ZengIKFSP21, YaoH17}. 
Therefore, we do not include sensitive features as inputs to the GNNs but instead use them during training as part of a penalty on disparity. Formally, we define the fairness penalty term of the client $C_i$ at epoch $t$  (denoted as $\loss^i_{\text{fair}_t}$) as the sum of two types the norm-1 of group fairness (SP and EO) measured over $C_i$'s local graph $\G_i$: 
\begin{equation*}
\loss^i_{\text{fair}_t} = ||{\Delta^i_{\text{SP}_t}}||_1 + ||{\Delta^i_{\text{EO}_t}}||_1.
\end{equation*}

Then, the fairness penalty term is added to the loss function of the local model as follows: 
\begin{equation}
\label{eqn:lossfunction}
\loss(\omega^i_t;\Z_i) = \loss^i_{\text{util}_t} + \alpha\,\loss^i_{\text{fair}_t},
\end{equation}
where $\loss^i_{\text{util}_t}$ denotes the utility loss (e.g., entropy loss for node classification), and $\alpha$ is a tunable hyper-parameter.

{\bf Local model parameter updates.} Finally, the client $C_i$ obtains the final local model update $\omega^i_t$ by executing multiple SGD steps: 
\begin{equation*}
\omega^i_t \leftarrow \hat\omega^i_t - \eta \cdot \nabla \loss(\omega^i_t;\Z_i),
\end{equation*}
where $\hat\omega^i_t$ is the initialized local model update (Eqn.~\eqref{eqn:client-local-model-update}), $\eta$ is the learning rate, and $\nabla \loss(\omega^i_t;\Z_i)$ denotes the gradient of the loss function (Eqn.~\eqref{eqn:lossfunction}).

\nop{
\begin{align}
    \begin{cases}
        \loss^i_{\text{total}_t} = \loss^i_{\text{fair}_t} + \alpha\,\loss^i_{\text{fair}_t} \\
        \omega^i_t \leftarrow \omega'^i_t - \eta \cdot \nabla \loss^i_{\text{total}_t}\\
    \end{cases}
\end{align}
\Wendy{In $\loss^i_{\text{total}_t}$, why both terms are $\loss^i_{\text{fair}_t}$?}
}

After completing local training, the client $C_i$ uploads its local model parameters $\omega^i_t$, the local fairness loss ${\loss^i_{\text{fair}_t}}$, and a group balance score $\EB^i$ to the server $\server$. We remark that sharing aggregated information of the local graphs such as ${\loss^i_{\text{fair}_t}}$ and $\EB^i$ with the server will make attacking individual information harder and thus protect their privacy.


\subsection{Server-side Fairness-weighted Global Model Update}
\label{subsec:server-aggregation}

\nop{
Given a server aggregation scheme $\mathrm{AGG}(\cdot)$, we formulate the global objective function as a bi-level problem:
\begin{equation*}
\underset{\omega \in \mathbb{R}^d}{\min} F(\omega):=\mathrm{AGG}\left(F_1(\omega),...,F_i(\omega)\right)
\end{equation*}
For each client $k \in {1, ..., K}$, $F_i(\omega)= \mathbb{E}_{\omega}[\loss(\omega;\Z_i)]$, where $\Z_i$ is the output of the local Graph Convolutional Networks (GCNs) used for node classification.
}
The server-side model update scheme in each iteration consists of four steps: 
\begin{itemize} 
\item \textit{Step 1.} Calculate the \textit{data-bias weight} ${\gamma_\E}_t$ based on the group balance scores provided by each client;
\item \textit{Step 2.} Derive the \textit{model-fairness weight} ${\gamma_\F}_t$ using the fairness loss metrics uploaded by the clients, ensuring fairness metrics are integrated into the global model aggregation process; 
\item \textit{Step 3.} Combine the data-bias weight and model-fairness weight into a combined weight ($\gamma_t$); and 
\item \textit{Step 4}. Aggregate the local model updates with the global model utilizing the combined weight. 
\end{itemize}
The following elaboration provides a detailed description of each step. 

{\bf Data-bias weight.} Upon obtaining local parameters from the selected subset of clients at epoch $t-1$, the server first constructs a data-bias weight ${\gamma_\E}_t \in \mathbb{R}^{K'}$, where $K'$ represents the number of clients in the subset: 
\begin{equation*}
{\gamma_\E}_t = \operatorname{softmax} \big(\EB^1, ..., \EB^{K'}\big). 
\end{equation*}
Each element in ${\gamma_\E}_t$ corresponds to the group balance score $\EB^i$ for each client $C_i$. Given that each client $C_i$ maintains a graph $\G_i$, the data-bias weight ${\gamma_\E}_t$ effectively gauges the equilibrium between inter- and intra-edges within each client graph.

{\bf Model-fairness weight.} While the group balance score provides a useful measure of the balance between inter- and intra-edges within each client graph, it is static and unable to capture the model's evolving learning process regarding fairness. To address this, we also consider a dynamic \textit{fairness metric weight} $\gamma_\F \in \mathbb{R}^{K'}$ that evaluates the statistical parity ${\Delta^i_{\text{SP}}}_t$ and equalized odds ${\Delta^i_{\text{EO}}}_t$ of each client's local model at each iteration:
\begin{equation*}
{\gamma_\F}_t =\exp\big(\operatorname{softmax}({\gammaFdot}_t)\big).
\end{equation*}
To calculate ${\gamma_\F}_t$, we start by a weight vector ${\gammaFdot}_t$, which takes each client's sum of two types of group fairness metrics as the element. Therefore, ${\gammaFdot}_t$ can be illustrated as:
\begin{equation*}
{\gammaFdot}_t = \big[{\Delta^1_{\text{SP}_t}} + {\Delta^1_{\text{EO}_t}}, ..., {\Delta^{K'}_{\text{SP}_t}} + {\Delta^{K'}_{\text{EO}_t}}\big].
\end{equation*}

To magnify the effect of this dynamic model-fairness weight ${\gammaFdot}_t$, we use an exponential function to rescale it after passing through a softmax function. This expands its range while maintaining its relative proportions as exponential functions grow faster than linear ones. 

We employ the softmax function to standardize the two weight vectors, making them comparable. Given the distinct properties of the group balance score $\EB$, which is bounded within the interval $[0, 1]$, each element in ${\gamma_\E}_t$ adheres to the same constraints. On the other hand, each element in ${\gamma'_\F}_t$ is a composite of two statistical fairness metrics for each client. This combination yields a value bounded within the range of $[0, 2]$. By utilizing the $\operatorname{softmax}$ function, we standardize the ranges of these two weights, harmonizing their scales and facilitating subsequent hyperparameter tuning and combination operations.

{\bf Combined weight.} After computing ${\gamma_\E}_t$ and ${\gamma_\F}_t$, we integrate them into a singular weight vector $\gamma_t \in \mathbb{R}^{K'}$ using the following equation:
\begin{equation}
\label{eqn:combined}
\gamma_t = \operatorname{softmax}\big(\frac{\lambda \cdot {\gamma_\E}_t + {\gamma_\F}_t}{\tau}\big),
\end{equation}
where $\lambda$ is a hyperparameter, and $\tau$ is the temperature parameter of the $\operatorname{softmax}$ function. This equation normalizes the two weight vectors ${\gamma_\E}_t$ and ${\gamma_\F}_t$ to probability distributions that sum to one, thereby producing a new weight vector $\gamma_t$. The temperature parameter controls the smoothness of the probability distribution, while the hyperparameter $\lambda$ adjusts the relative importance of data bias  in the final combination.

{\bf Global model updating strategy.} With the unified weights, we update the global model $\omega_t$ by combining the uploaded local models $\omega^i_t$ from $K'$ clients, where $i \in {1, ..., K'}$, with the combined weight vector $\gamma_t$ as follows:
\begin{equation}
\label{eq:global update}
\omega_t \leftarrow \gamma^1_t \cdot \omega^1_t + ...+ \gamma^{K'}_t \cdot \omega^{K'}_t,
\end{equation}
where $\gamma^i_t$ is the $k$-th element of the combined weight vector. The updated global model $\omega_t$ is then broadcasted to all clients by the server. 

\section{Experimental Evaluation}

\label{sc:exp}

\nop{

{\bf To-do items}

\begin{itemize}
    \item Finish all results on NBA dataset.
    \item Sample graphs of different distribution of group balance scores across the clients (e.g., uniform distribution of group balance scores vs. skewed distribution of group balance scores). 
    The uniform setting should have 2 settings: (1) all clients have balanced graph (large balance scores; 
    and (2) all clients have imbalanced graphs where these graphs have similar balance scores that are large. 
    The skewed setting should have some graphs of small balance score but the others have large ones. 
    Measure both fairness and accuracy for these different settings. Compare the performance under these settings. 
    \item Add the results of various $\alpha$ values (for local model setup) too. All the hyper-parameters you have tried are for global model setup. You don't need to worry about comparing with the baselines for this part of experiments.
\end{itemize}

}

In this section, we present the results of our experiments on three real-world datasets for node classification tasks. Our proposed approach, \ourmethod\; is evaluated and compared against several baseline schemes in terms of both utility and fairness metrics.

\subsection{Experimental  Setup}

{\bf Datasets.} We employ three real-world datasets, Pokec-z, Pokec-n, and NBA, that are widely used for GNN training \cite{DaiW21, jiang2022fmp, DDongLJL22, FrancoNDAO22, KoseS22, OnetoND20}. 
Both  Pokec-z and Pokec-n graphs are social network data collected from two regions in Slovakia \cite{takac2012data}. We pick the  {\tt region} attribute as the sensitive attribute and {\tt working fields} as the label for both Pokec-z and Pokec-n graphs.  
The NBA dataset comprises data on nearly 400 basketball players, including their age, nationality, and salary. The edges in this dataset represent the Twitter connections among the players. We consider the {\tt nationality} attribute as the sensitive attribute  and {\tt salary} as the prediction label (whether the salary is higher than the median). 
We binarize the sensitive attributes and prediction labels for all the three datasets. 
The statistics of these datasets are summarized in Appendix~\ref{app:data}. 

{\bf Evaluation metrics.} 
In terms of GNN performance, we measure {\em Accuracy} and {\em AUC score} for node classification.
Regarding fairness, we measure  $\Delta_\text{EO}$ and $\Delta_\text{SP}$ 
of the predictions. Intuitively, smaller $\Delta_\text{EO}$ and $\Delta_\text{SP}$  indicate better fairness. 
Furthermore, we measure two kinds of trade-offs between fairness and model accuracy as follows:
\begin{align*}
    &\operatorname{Trade-off}_{\text{ACC}} = \operatorname{Accuracy} / \left(\Delta_\text{EO} + \Delta_\text{SP}\right),\\
    &\operatorname{Trade-off}_{\text{AUC}} = \operatorname{AUC} / \left(\Delta_\text{EO} + \Delta_\text{SP}\right).
\end{align*}
    
Intuitively, a model that delivers higher accuracy and lower EO and SP values (i.e., lower unfairness) will lead to a better trade-off.

{\bf Baselines.} 
To the best of our knowledge, there is no existing work on fairness issues in federated GNNs. Thus we adapt existing methods of fair federated learning and fair GNNs. We consider two baseline methods specifically: 
\begin{itemize}
    \item We deploy GNNs under an existing fairness-enhancing federated learning framework named  {\bf FairFed}~\cite{ezzeldin2021fairfed}. 
    \item We deploy an existing fairness-aware GNN model named {\bf FairAug}~\cite{KoseS22} to the federated framework (denoted as FairAug$_{\text{+FL}}$). In particular, we replace the local models with FairAug.  
\end{itemize}

\nop{We consider two types of baseline approaches for comparison with \ourmethod. 
\begin{itemize}
    \item {\bf Fair GNNs under non-federated setting.} We evaluate against three state-of-the-art (SOTA) fairness-aware GNN methodologies, namely FairGNN \cite{DaiW21}, FairVGNN \cite{WangZDCLD22}, EDITS \cite{DDongLJL22}, and FairAug \cite{KoseS22}. \footnote{We utilize the implementations of FairGNN available at {\url{https://github.com/EnyanDai/FairGNN}}, FairVGNN at {\url{https://github.com/yuwvandy/fairvgnn}}, EDITS at {\url{https://github.com/yushundong/EDITS}}, and FairAug at {\url{https://openreview.net/forum?id=4pijrj4H_B}}.} All these approaches were originally designed for non-federated settings. 
    We simulate a federated learning environment by splitting the whole graph dataset into several subgraphs. We consider an adapted version of FairFed~\cite{ezzeldin2021fairfed}, which was not originally designed for GNNs. We changed its Additionally, we modify FairAug~\cite{KoseS22} and deploy it under the federated setting (denoted as FairAug$_{\text{+FL}}$) by replacing our local model with FairAug. 
\end{itemize}
}
 To ensure a fair comparison between different models on each dataset, we remove all the isolated nodes in the datasets. 

{\bf GNN Setup.} We deploy a 2-layer graph convolutional networks (GCNs), 
tailoring neuron configurations to each dataset. Specifically, the GCNs are set up with 64 neurons for the NBA and Pokec-n datasets, whereas 128 neurons per layer for the Pokec-z dataset. We utilize the Rectified Linear Unit (ReLU)~\cite{fukushima1969visual} and the Adam optimizer~\cite{KingmaB14}. Our \ourmethod\, implementation is developed using the Deep Graph Library (DGL)\cite{wang2019dgl}, PyTorch Geometric\cite{Fey19}, and NetworkX~\cite{SciPyProceedings_11}. 
The details of GCNs are described in Appendix~\ref{app:exp_detail}. 

{\bf Federated learning setup.} In the federated learning setting, we randomly select a set of nodes from each graph data set to serve as the centers of egocentric networks, which also determine the number of clients, and we specify a certain number of hops to grow the networks. Each network (or subgraph) is considered as a client.
We evaluate our proposed approach on the Pokec datasets in a setting with 30 clients, each holding a 3-hop ego-network, and on the NBA dataset in a setting with 10 clients, each holding a 3-hop ego-network. See more details in Appendix~\ref{app:exp_detail}.



\begin{table}[t]
\caption{Node classification and fairness evaluation on the test sets of the \textbf{Pokec-z} and \textbf{Pokec-n} datasets. 
}

\label{tab:results_pokec}
\centering
\setlength{\tabcolsep}{0.8pt}  
\resizebox{1\textwidth}{!}{
\begin{tabular}{@{}lcccccccc@{}}
\toprule
  & \multicolumn{8}{c}{{\bf Global}} \\
  \cmidrule(lr){2-9}
  \multirow{ 2}{*}{Method} & \multicolumn{4}{c}{Pokec-z} & \multicolumn{4}{c}{Pokec-n}   \\
\cmidrule(lr){2-5} \cmidrule(lr){6-9}
   & Accuracy(\%) $\uparrow$ & AUC(\%) $\uparrow$   & $\Delta_{\text{SP}}$ (\%) $\downarrow$ & $\Delta_{\text{EO}}$(\%) $\downarrow$  & Accuracy(\%) $\uparrow$ & AUC(\%) $\uparrow$   & $\Delta_{\text{SP}}$ (\%) $\downarrow$ & $\Delta_{\text{EO}}$(\%) $\downarrow$ \\ \cmidrule(lr){1-5} \cmidrule(lr){6-9}
FairAug$_{\text{+FL}}$ & $64.62\pm0.65$              & $64.73\pm0.67$ & $4.01\pm0.28$                 & $3.98\pm0.51$                                                & $64.38\pm0.38$              & $62.45\pm0.47$ & $4.31\pm1.32$                 & $5.98\pm1.96$                                            \\
FairFed    & $65.48\pm2.63$              & $69.72\pm2.7$  & $2.92\pm1.28$                 & $3.08\pm1.21$                                                & $61.16\pm2.46$               & $61.97\pm1.93$ & $2.66\pm1.20$                 & $3.69\pm2.95$                                                \\
\ourmethod  & $\mathbf{68.17\pm0.19}$              & $\mathbf{73.47\pm0.52}$ & $\mathbf{1.66\pm1.02}$                 & $\mathbf{1.49\pm0.52}$                                                & $\mathbf{67.00\pm0.27}$               & $\mathbf{69.62\pm1.34}$  & $\mathbf{0.85\pm0.31}$                 & $\mathbf{1.00\pm1.03}$  \\
\midrule
& \multicolumn{8}{c}{{\bf Local}} \\
\cmidrule(lr){2-9}
Method
& Local Acc(\%) $\uparrow$ & Local AUC(\%) $\uparrow$   & Local $\Delta_{\text{SP}}$ (\%) $\downarrow$ & Local $\Delta_{\text{EO}}$(\%) $\downarrow$  & Local Acc(\%) $\uparrow$ & Local AUC(\%) $\uparrow$   & Local $\Delta_{\text{SP}}$ (\%) $\downarrow$ & Local $\Delta_{\text{EO}}$(\%) $\downarrow$ \\ \cmidrule(lr){1-5} \cmidrule(lr){6-9}
FairAug$_{\text{+FL}}$ & $63.33\pm4.91$              & $57.8\pm3.76$ & $8.54\pm0.53$ & $8.54\pm0.53$                                               & $61.03\pm5.89$              & $52.50\pm 4.23$ & $21.12\pm0.65$                 & $28.11\pm0.71$                                            \\
FairFed    & $59.44\pm3.60$              & $63.7\pm 12.13$ & $8.12\pm3.32$                 & $7.69\pm1.96$                                                & $56.26\pm2.42$               & $58.41\pm1.75$ & $8.38\pm4.93$                 & $7.79\pm4.36$                                                \\
\ourmethod  & $\mathbf{68.43\pm1.99}$              & $\mathbf{75.74\pm2.13}$ & $\mathbf{7.24\pm0.99}$                 & $\mathbf{7.35\pm1.58}$                                                & $\mathbf{68.19\pm2.40}$               & $\mathbf{69.49\pm3.81}$  & $\mathbf{8.38\pm2.27}$                 & $\mathbf{7.23\pm5.11}$  \\
\bottomrule
\end{tabular}
}
\end{table}


\subsection{Performance Evaluation}

Table~\ref{tab:results_pokec}  
provide a comprehensive performance comparison of various models on the Pokec datasets, with a particular focus on their fairness attributes, specifically $\Delta_{\text{SP}}$ and $\Delta_{\text{EO}}$. The results on the NBA dataset can be found in Appendix~\ref{app:etra_exp} Table~\ref{tab:results_NBA}.

 Specifically, we evaluate the models on a global test set as well as local test sets. Each local client has its corresponding test set. After we get the metrics from all clients, we report the median value of these clients for each metric. Among the evaluated methods, our proposed framework, \ourmethod\; consistently outperforms other methods across all three datasets in both global and local sets. In particular, \ourmethod\ achieves the lowest $\Delta_{\text{SP}}$ and $\Delta_{\text{EO}}$ values in all three datasets, indicating the effectiveness of the proposed scheme for group fairness in federated settings.

The trade-off performance of each model on the three datasets is illustrated in Table~\ref{tab:trade-off}. 
The proposed method achieves the highest trade-off values for the Pokec-z, Pokec-n, and NBA datasets, respectively, demonstrating its exceptional ability to balance accuracy and fairness.



\begin{table}[t]
\caption{Trade-off evaluation of \ourmethod\ and baselines.}
\label{tab:trade-off}
\centering
\resizebox{0.82\textwidth}{!}{
\setlength{\tabcolsep}{2.6pt}
\begin{tabular}{lccccccccc}
\toprule
&\multicolumn{3}{c}{Pokec-z} & \multicolumn{3}{c}{Pokec-n} & \multicolumn{3}{c}{NBA} \\
\cmidrule(lr){2-4} \cmidrule(lr){5-7} \cmidrule(lr){8-10}
Metric & \ourmethod & FairFed & FairAug$_{\text{+FL}}$ & \ourmethod & FairFed & FairAug$_{\text{+FL}}$ & \ourmethod & FairFed & FairAug$_{\text{+FL}}$\\
\midrule
$\operatorname{Trade-off}_{\text{ACC}}$ $\uparrow$ & $21.6445$ & $5.6581$ & 11.129 & $36.2460$ & $5.6581$ & 8.2651 & $13.4773$ & $5.6581$ & 2.4302\\
$\operatorname{Trade-off}_{\text{AUC}}$ $\uparrow$ & $23.3264$ & $5.6598$ & 12.0854 & $37.6649$ & $12.5286$ & 8.235 & $14.3269$ & $1.8336$ & 2.7854\\
\bottomrule
\end{tabular}
}
\end{table}

In summary, our proposed \ourmethod\;framework outperforms other baselines in terms of $\Delta_{\text{SP}}$, $\Delta_{\text{EO}}$, and trade-off values across all datasets. These results validate the effectiveness and robustness of \ourmethod\;in achieving competitive performance while maintaining group fairness in node classification tasks. These findings also highlight the broad applicability of our approach in the field of federated learning, suggesting its potential for adaptation to other domains.

Given the limited number of baseline models, we further evaluate our approach in comparison to several Fair GNNs within a non-federated setting. Comprehensive experimental results and supplementary information can be found in Table~\ref{tab:results_non_fed} presented in Appendix~\ref{app:etra_exp}.

\nop{
\subsection{Performance on IID Data vs. Non-IID Data}




In this section, we evaluate our proposed \ourmethod\; against the baseline FairFed under various group balance score distributions. Our results consistently show that \ourmethod\; outperforms FairFed in both accuracy and fairness metrics.

We scrutinize the performance under three distinct distribution settings of group balance scores, namely:

\begin{itemize}
\item {\bf Uniform Distribution with Large Group Balance Scores (UD-LGBS)}: This setting assumes a uniform distribution of group balance scores with large values.
\item {\bf Uniform Distribution with Small Group Balance Scores (UD-SGBS)}: This setting assumes a uniform distribution of group balance scores with small values.
\item {\bf Skewed Distribution of Group Balance Scores (SD-GBS)}: This setting assumes a skewed distribution of group balance scores.
\end{itemize}

Besides global fairness metrics, $\Delta_{\text{SP}}$ and $\Delta_{\text{EO}}$, we also evaluate client-level fairness using Local $\Delta_{\text{SP}}$. This metric, calculated as the median of Statistical Parity values across all clients using their local data, provides insight into fairness at the local level.


Table~\ref{tab:vary_dist_rlt} shows that \ourmethod\ consistently surpasses FairFed in accuracy and fairness ($\Delta_{\text{SP}}$ and $\Delta_{\text{EO}}$) across all datasets and distribution settings. This highlights \ourmethod's robustness and effectiveness in handling varying group balance score distributions while maintaining superior performance and fairness.


\ourmethod\ excels under both UD-LGBS and UD-SGBS settings, showing its capability to handle uniform group balance scores of any size. It also outperforms FairFed under the skewed SD-GBS setting, demonstrating its robustness in dealing with non-IID data distributions.


Notably, \ourmethod\; also demonstrates superior performance in terms of Local $\Delta_{\text{SP}}$, indicating that \ourmethod\; not only enhances global fairness but also ensures local fairness at the client level.

It is worth noting that FairFed exhibits extremely low Local $\Delta_{\text{SP}}$ values in some settings. However, this is not indicative of superior performance. Rather, it is a consequence of the extremely low accuracy of FairFed in these settings, which hovers around 50\%. This suggests that FairFed is unable to make accurate predictions, resulting in a low Local $\Delta_{\text{SP}}$ value that does not truly reflect a fair outcome.

In summary, these results validate the effectiveness and robustness of \ourmethod\; in achieving high performance and fairness in federated learning settings with graph-structured data, under various distribution settings. These findings underscore the potential of \ourmethod\; as a leading approach in the realm of federated learning with GNNs, particularly in scenarios involving non-IID data distributions.

\begin{table*}[t]
\caption{Performance of \ourmethod\ under different data distributions \Wendy{Why there is no result of local $\Delta_{\text{EO}}$(\%)?}
}
\label{tab:vary_dist_rlt}
\centering
\setlength{\tabcolsep}{2.5pt} 
\resizebox{\textwidth}{!}{
\begin{tabular}{@{}lcccccccccccc@{}}
\toprule
& \multicolumn{4}{c}{Pokec-z} & \multicolumn{4}{c}{Pokec-n} & \multicolumn{4}{c}{NBA} \\
\cmidrule(lr){2-5} \cmidrule(lr){6-9} \cmidrule(lr){10-13}
Distribution & ACC(\%) & $\Delta_{\text{SP}}$(\%) & $\Delta_{\text{EO}}$(\%) & Local $\Delta_{\text{SP}}$(\%) & ACC(\%) & $\Delta_{\text{SP}}$(\%) & $\Delta_{\text{EO}}$(\%) & Local $\Delta_{\text{SP}}$(\%) & ACC(\%) & $\Delta_{\text{SP}}$(\%) & $\Delta_{\text{EO}}$(\%) & Local $\Delta_{\text{SP}}$(\%) \\ \midrule
FairFed (UD-LGBS) & $57.18$ & $1.57$ & $2.74$ & $5.77$ & $66.00$ & $3.33$ & $3.38$ & $21.05$ & $76.92$ & $20.87$ & $6.92$ & $20.87$\\
FairFed (UD-SGBS) & $59.53$ & $0.92$ & $1.20$ & $6.14$ & $57.72$ & $1.58$ & $5.26$ & $5.26$ & $73.07$ & $2.93$ & $2.68$ & $2.93$\\
FairFed (SD-GBS) & $61.25$ & $1.93$ & $5.78$ & $7.14$ & $56.63$ & $2.59$ & $4.56$ & $0.01$ & $61.53$ & $20.86$ & $10.34$ &$20.86$\\
\midrule
\ourmethod\,(UD-LGBS) & $67.10$ & $1.42$ & $3.43$ & $6.64$ & $66.45$ & $1.75$ & $3.11$ & $5.26$ & $74.35$ & $3.35$ & $5.38$ & $3.35$ \\
\ourmethod\,(UD-SGBS) & $67.93$ & $0.52$ & $0.89$ & $11.15$ & $64.25$ & $0.16$ & $0.11$ &$5.26$ &$74.36$ &$2.07$ &$2.68$ &$2.07$\\
\ourmethod\,(SD-GBS) & $67.31$ & $1.66$ & $2.73$ & $5.56$ & $66.32$ & $3.11$ & $5.28$ & $10.52$ & $67.95$ & $2.70$ & $1.53$ & $26.96$ \\
\bottomrule
\end{tabular}
}
\end{table*}
}

\subsection{Ablation Study}
In the ablation study reported in Figure~\ref{fig:ablation}, we evaluate the significance of the client-side fairness-aware local model updates and the server-side fairness-weighted model updates in our proposed model on the Pokec datasets and the NBA dataset. The omission of the client-side updates or server-side updates leads to a noticeable decrease in accuracy and an increase in the disparity measures $\Delta_{\text{SP}}$ and $\Delta_{\text{EO}}$, underscoring their importance in maintaining both classification accuracy and fairness. Although discarding the server-side fair aggregation scheme results in the highest accuracy and AUC values on the Pokec-z dataset, it negatively impacts fairness, indicating its significant role in promoting fairness. The full model, $\text{F}^2$GNN achieves the best balance between accuracy and fairness, demonstrating the effectiveness of all components in federated graph learning.


\begin{figure}[t]
    \centering    
    {\includegraphics[width=0.35\textwidth]{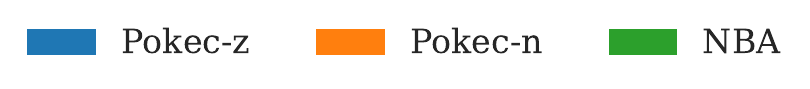}}\\
    \subfigure[Global Model Accuracy]
    {\includegraphics[width=0.30\textwidth]{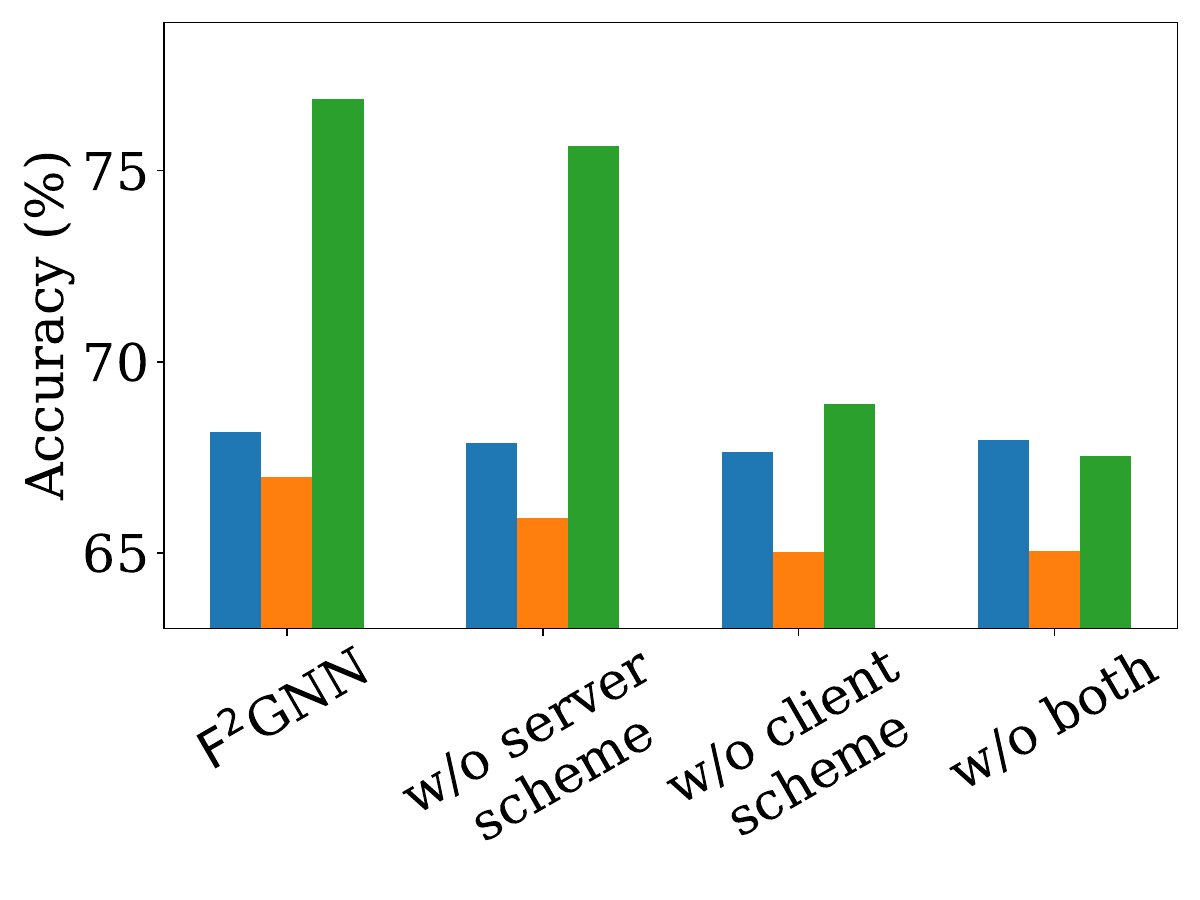}}
    \subfigure[Global Model AUC]
    {\includegraphics[width=0.30\textwidth]{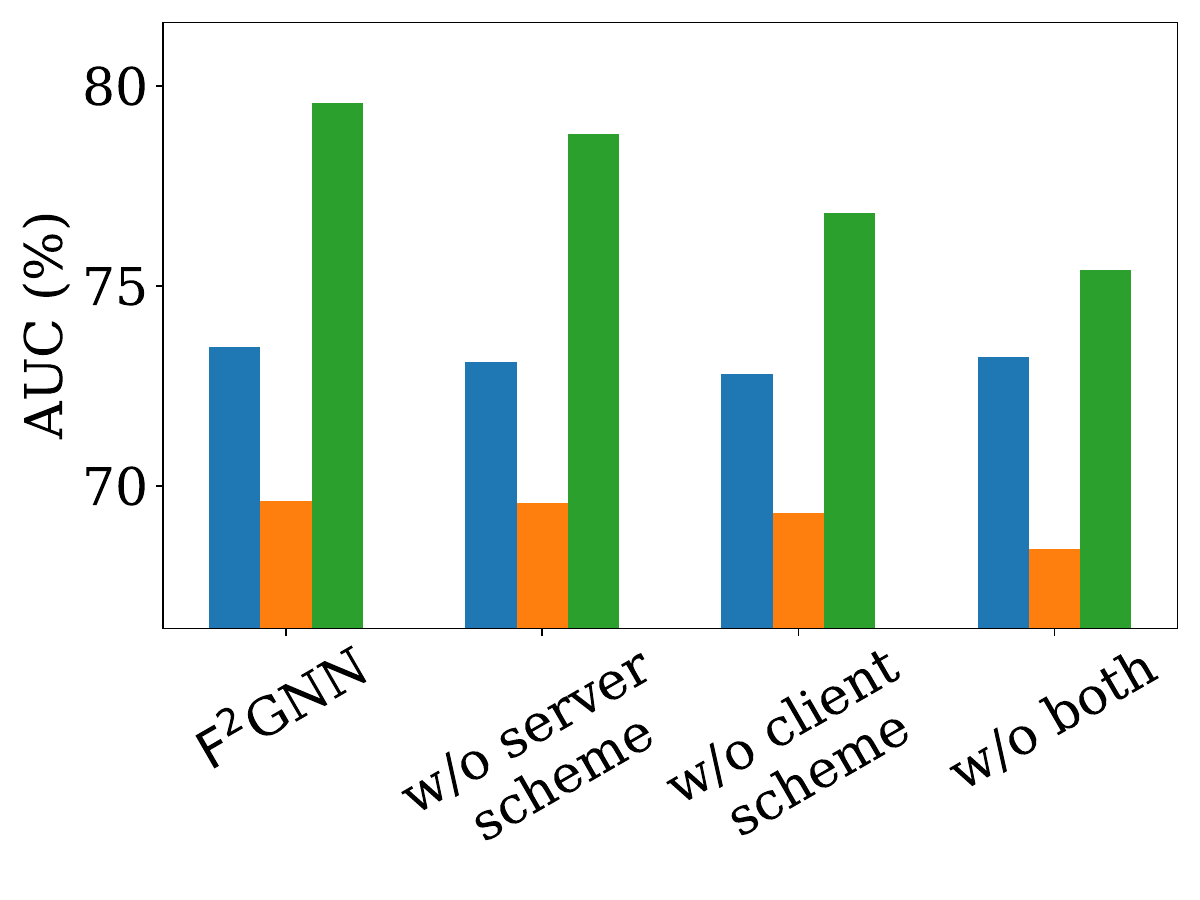}}
    \subfigure[Global $\Delta_{\text{SP}}$]{\includegraphics[width=0.30\textwidth]{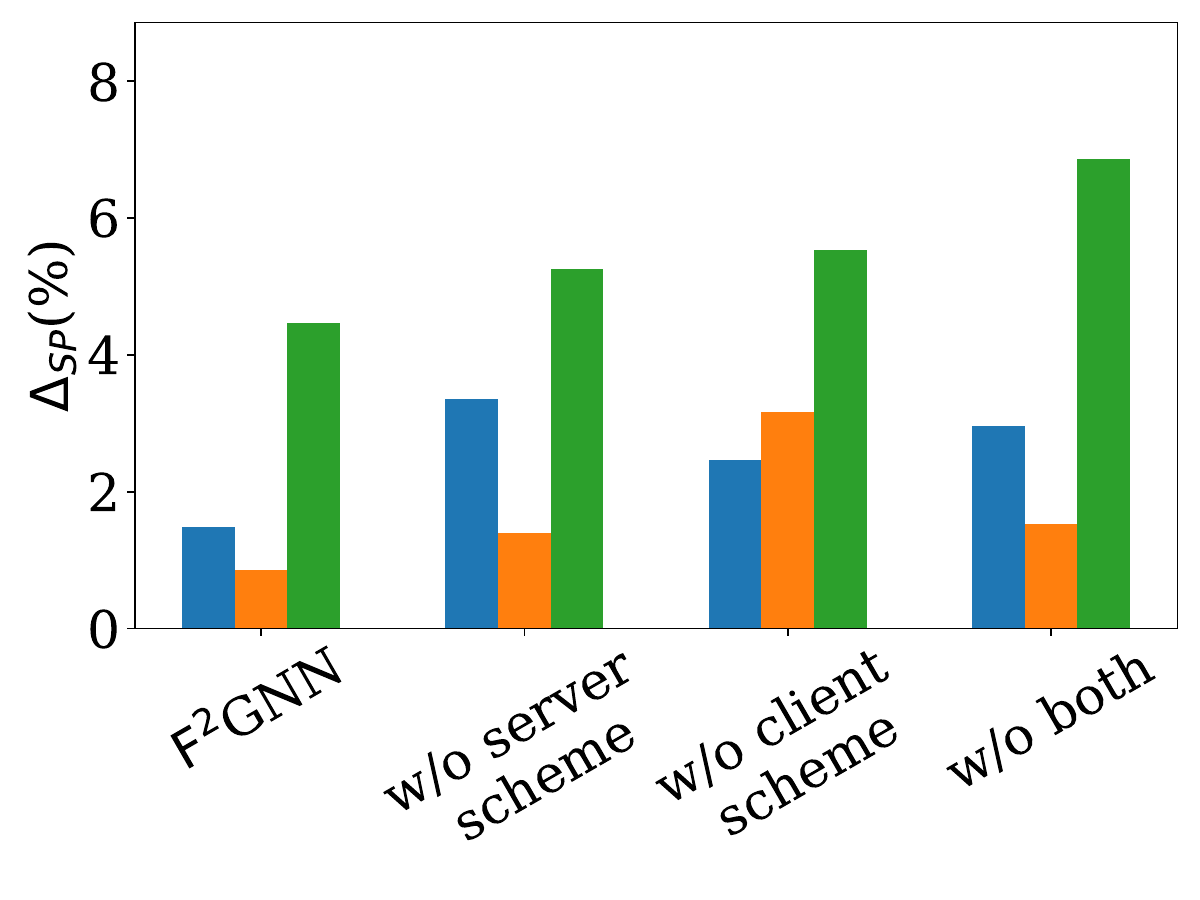}}
    \subfigure[Global $\Delta_{\text{EO}}$]{\includegraphics[width=0.30\textwidth]{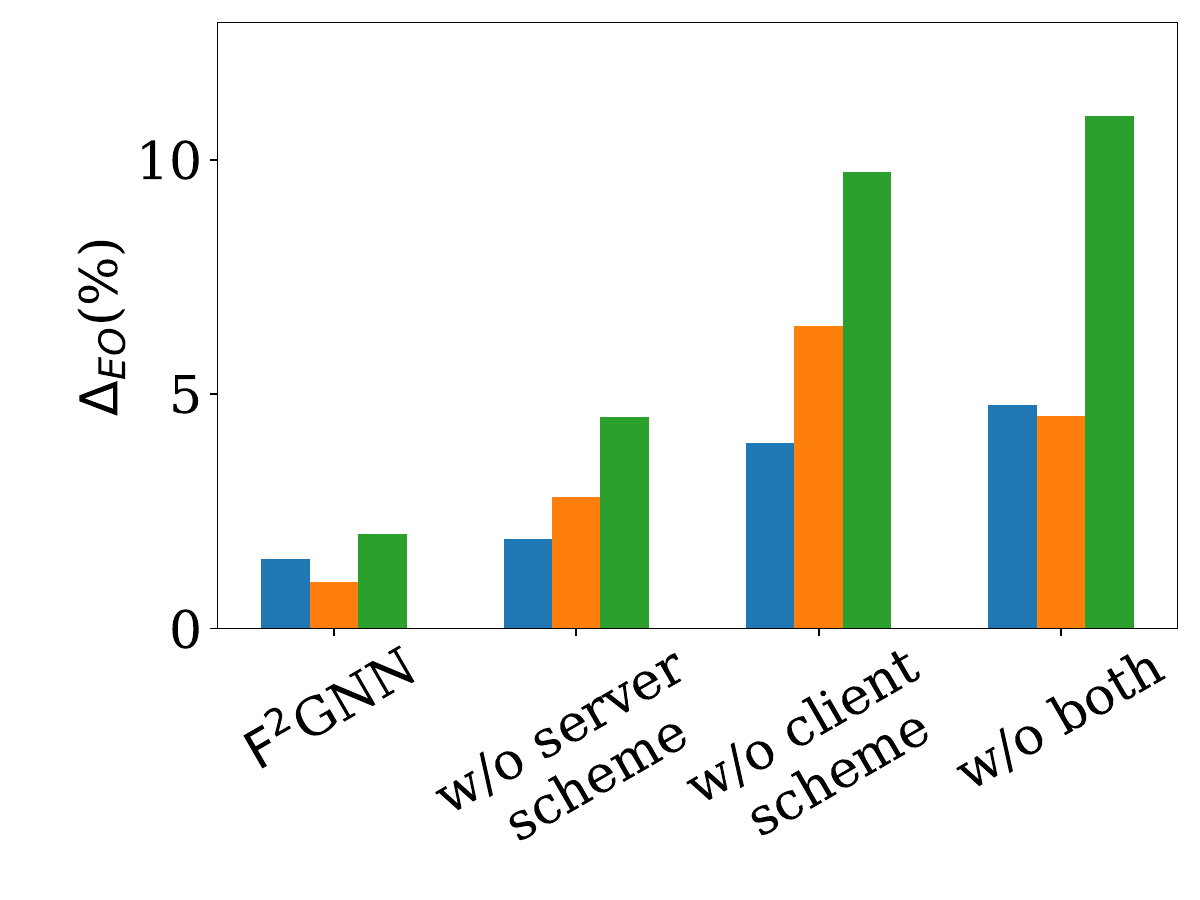}}
    \subfigure[Local $\Delta_{\text{SP}}$]{\includegraphics[width=0.30\textwidth]{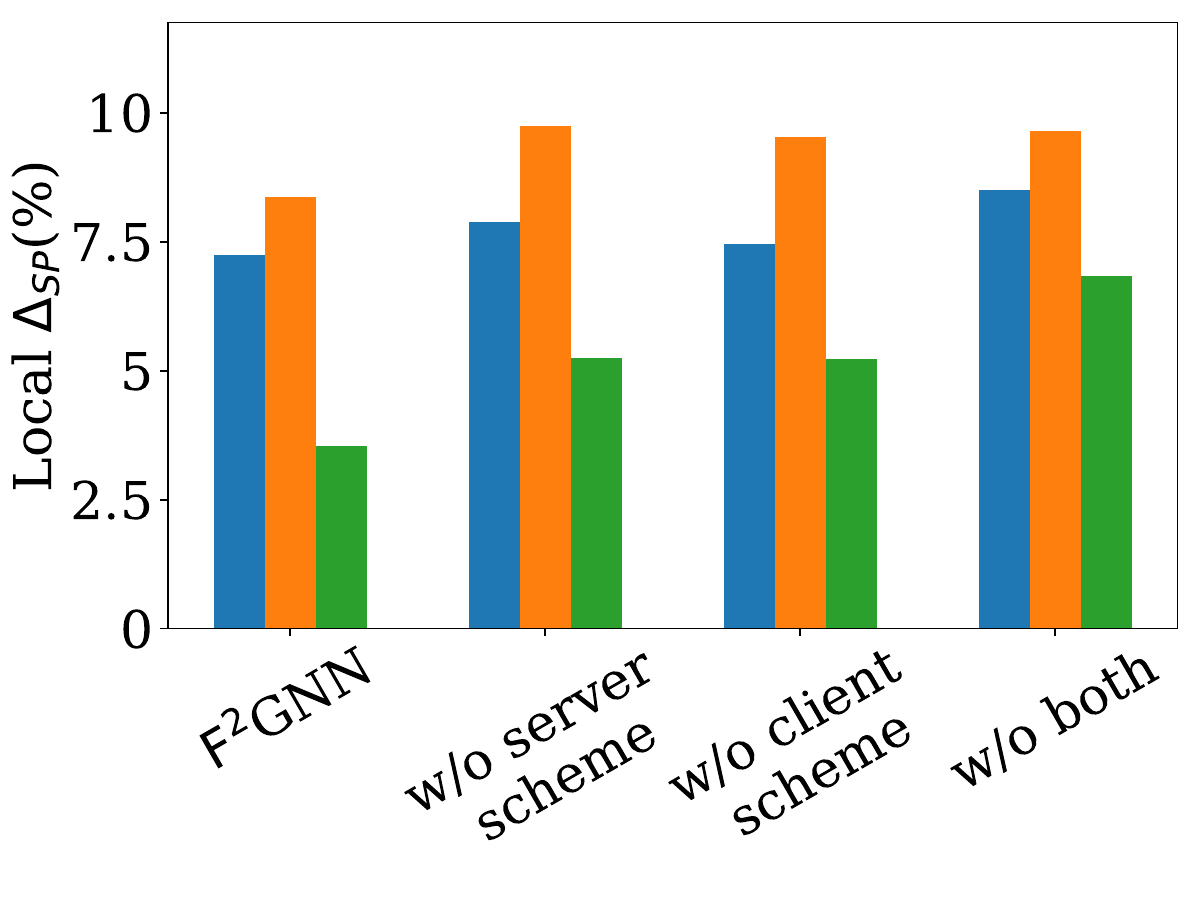}}
    \subfigure[Local $\Delta_{\text{EO}}$]{\includegraphics[width=0.30\textwidth]{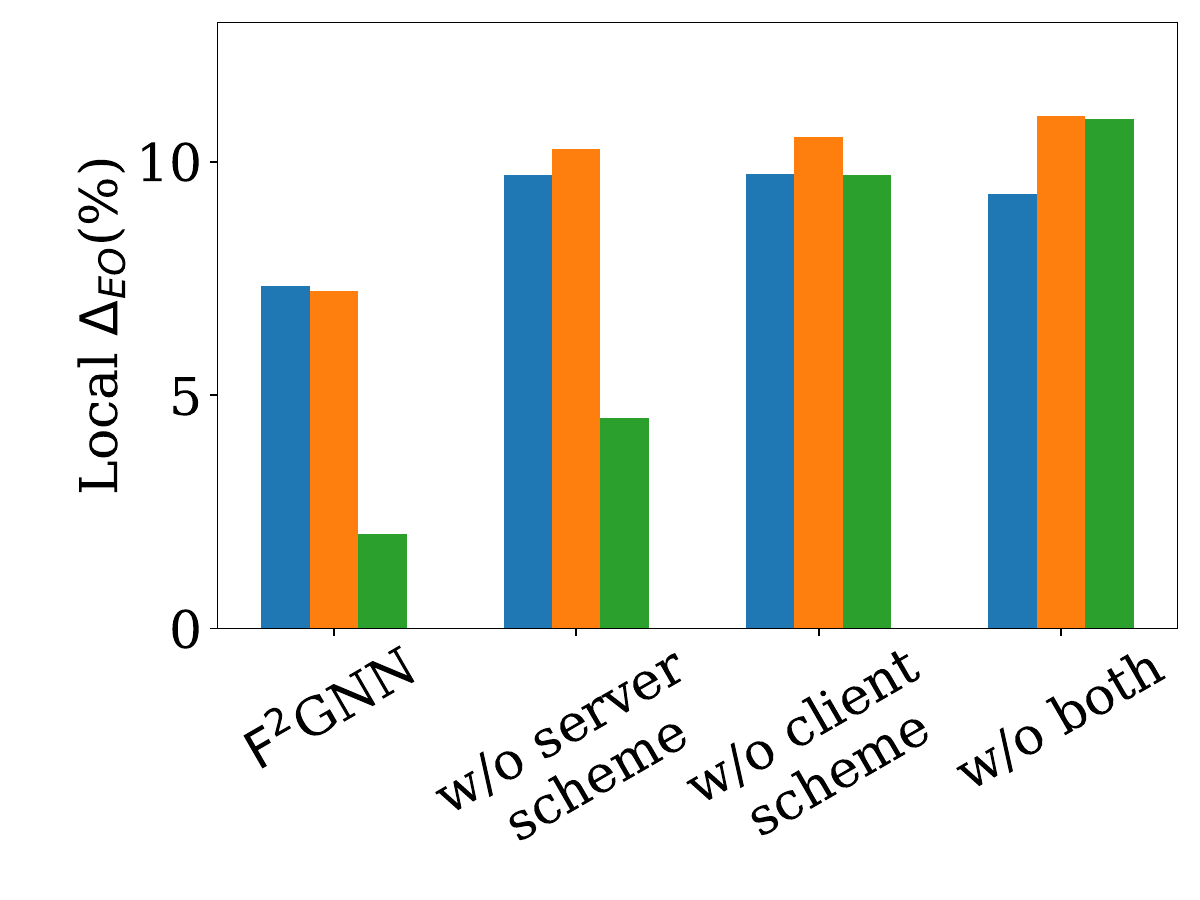}}
    \caption{Ablation study of the proposed method on three datasets.} 
    \label{fig:ablation}
\end{figure}

\nop{\subsection{Effect of Hyper Parameters on Performance}

The hyperparameter $\lambda$ in our proposed model (Eqn. \ref{eqn:combined}) serves a pivotal role, acting as a balancing lever between performance and fairness. To elucidate its impact, we perform a sensitivity analysis on the Pokec-z dataset.

As depicted in Figure~\ref{fig:para_sens} (a)(b), both $\Delta_{\text{SP}}$ and $\Delta_{\text{EO}}$ metrics, which we strive to minimize, generally exhibit a downward trend as $\lambda$ ascends from 2 to 4. In particular, the $\Delta_{\text{SP}}$ metric initially increases and then decreases to its minimum  at $\lambda=4$. 

\begin{figure}[h]
    \centering   
    {\includegraphics[width=0.4\textwidth]{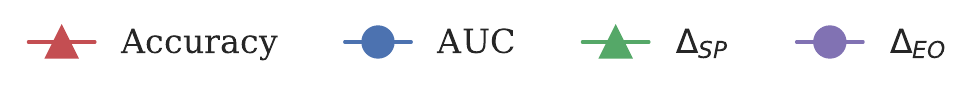}}\\
    \subfigure[Model Accuracy]
    {\includegraphics[width=0.32\textwidth]{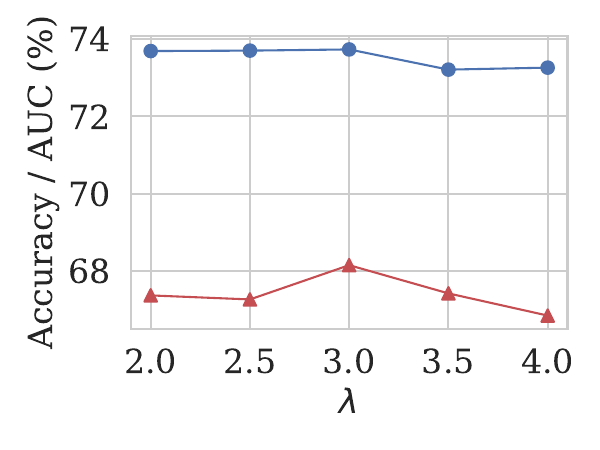}}
    \subfigure[Model Fairness]{\includegraphics[width=0.32\textwidth]{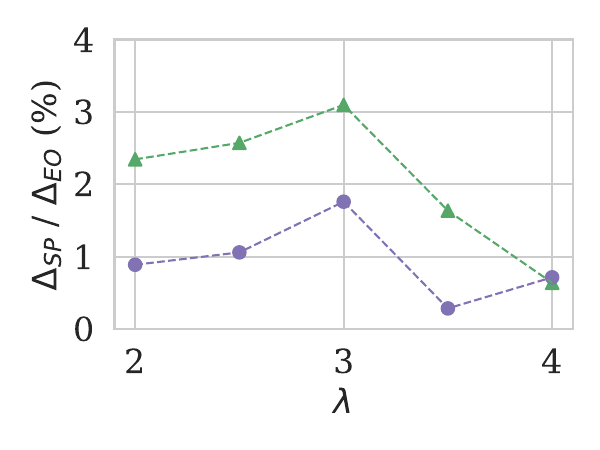}}
     \caption{Impact of $\lambda$ on model accuracy and fairness} 
    \label{fig:para_sens}
\end{figure}

}
\section{Conclusion}


This research investigates the impact of data bias in federated learning, particularly in the context of graph neural networks. We emphasize the significance of considering the correlation between sensitive attributes and graph structures, in achieving model fairness. To address these challenges, we propose \ourmethod, a comprehensive approach that incorporates fairness in both client-side local models and server-side aggregation schemes. Our experiments on real-world datasets demonstrate the effectiveness of our algorithm in mitigating bias and enhancing group fairness without compromising downstream task performance. Moving forward, we acknowledge the importance of strengthening privacy protection when uploading group balance scores. One potential solution is the utilization of techniques such as homomorphic encryption, which enables computations on encrypted data and secure aggregation on the server side.
\nop{
In this research, we delve into the impact of data bias in federated learning through the lens of graph neural networks \Wendy{This sentence is inaccurate. The project does not aim to understand the impact of data bias on GNNs.}, emphasizing the influence of the correlation between sensitive attribute and graph representation and graph structure on model fairness. \Wendy{I don't understand this term of "feature-sensitive attribute correlation". Do you mean the correlation between sensitive attribute and graph representation?} To address these challenges, we propose \ourmethod\ which includes both a client-side fairness-aware local model update scheme and a server-side aggregation fairness-aware scheme.  
This comprehensive approach effectively mitigates bias in the global model, resulting in enhanced group fairness without compromising performance \Wendy{This sentence is redundant}. Our algorithm's effectiveness is substantiated through experiments on real-world datasets \Wendy{This sentence is awkward and should be rewritten}. Looking ahead, we recognize that the process of uploading group balance scores can be further fortified in terms of privacy protection. We propose the use of third-party privacy protection techniques \Wendy{homomorphic encryption is not a *third-party* privacy protection technique}, such as homomorphic encryption, which allows computations on encrypted data. This strategy would enable clients to encrypt scores before uploading them to the server, with the server performing aggregation in the encrypted domain.  
}
\bibliographystyle{alpha}
\bibliography{ref}

\clearpage
\newpage
\appendix
\section*{\centering Appendix} 
\section{Theoretical Analysis}\label{app:theo}
In this paper, we focus primarily on graph convolutional networks (GCNs). Given a non-linear function $g(\cdot)$, we consider a recursive relation in a GCN layer-wise propagation rule as $ \mH^l=g(\DAD\mH^{l-1}\W^l),$
where $\hat{\A}= \A + \mathbf{I}$, $\mathbf{I}$ is the identity matrix, $\hat{\D}$ is the diagonal node degree matrix $\hat{\A}$, $\W^l$ is the weight matrix in layer $l$ and $\mH^l \in \mathbb{R}^{N\times F^l}$ is the matrix of node embeddings at layer $l \in \{0,...,L\}$.
We note that the final layer $L$ of a GCN serves as the output layer, with $\Z \in \mathbb{R}^{N \times F^L}$ representing the output node representation, namely $\mH^L=\Z$, where $F^L$ denotes the output dimension, and $N$ denotes the size of the input data. GCNs are commonly used for representation learning, where node embeddings are generated by iteratively aggregating information from their neighbors to solve various learning tasks \cite{KipfW17}. Assuming that $\mathbf{h}^{l-1}_i$ is the representation of the $i$-th row of $\mH^{l-1}$ in layer $l-1$, the temporary aggregation representation through the $\DAD$ aggregation scheme can be expressed as ${\mathbf{h}^l_{agg}}_i = \frac{1}{d_i}\sum_{j \in \N(v_i)} \mathbf{h}^{l-1}_j$. Then, the node embedding $\mathbf{h}^{l}_i$ in layer $l$ is obtained by ${\mathbf{h}^l_{agg}}_i\,\w^l_i$, where $\w^l_i$ represents the $i$-th column of $\W^l$. 

\subsection{Calculation of $\mu_0$ and $\mu_1$}
\label{app:mu}
 The computation of means $\mu_0$ and $\mu_1$ ($\mu_j^{s=0}$ and $\mu_j^{s=1}$) can be formalized as follows:
\begin{equation*}
 \mu_j^{s=0} = \frac{1}{N_{s=0}} \sum_{j=1}^{N_{s=0}} \mathbf{X}_{j}^{s=0}\text{ and }\mu_j^{s=1} = \frac{1}{N_{s=1}} \sum_{j=1}^{N_{s=1}} \mathbf{X}_{j}^{s=1}, 
\end{equation*}
where $N_{s=0}$ ($N_{s=1}$) is the number of nodes with a sensitive attribute $s$.

\subsection{Proof of Lemma~\ref{lemma:rpb}}\label{proof:lemma}
\begin{mylemma}\ref{lemma:rpb}
    Given a graph $\G$ and a GNN model that satisfies the data and model assumptions, respectively, the Point-biserial correlation (denoted as $\pbcz$) between the graph representation  $\Z$ and the sensitive attribute $\mS$ is measured as the following:
 \begin{equation*}
        \pbcz = (\Nzero \mu_0 - \None \mu_1)(\intra -\inter)\cdot\frac{\sqrt{\Nzero\,\None}}{\SDZ\Ntotal^2},
 \end{equation*}
where $\intra$ and $\inter$ are defined by Definition \ref{def:edge_balance}, $\Nzero$ $(\None)$ is the number of nodes associated with $s=0$ $(s=1)$, $\SDZ$ is the standard deviation of $\Z$, $\Ntotal$ is the total number of nodes of $\G$, and $\mu_0$ ($\mu_1$) is the mean value of nodes that are associated with $s=0$ ($s=1$).
\end{mylemma}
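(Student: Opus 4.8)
The plan is to instantiate the point-biserial correlation coefficient from the preceding definition, taking the continuous variable to be a fixed column of the learned representation $\Z$ and the dichotomous variable to be the sensitive attribute $\mS$. Writing $\mu_0^{\Z}$ and $\mu_1^{\Z}$ for the means of that column over the nodes with $s=0$ and $s=1$ respectively, the definition immediately gives
$$\pbcz = \frac{\mu_0^{\Z} - \mu_1^{\Z}}{\SDZ}\sqrt{\frac{\Nzero\None}{\Ntotal^2}}.$$
Hence the whole lemma reduces to the single identity $\mu_0^{\Z} - \mu_1^{\Z} = (\Nzero\mu_0 - \None\mu_1)(\intra - \inter)/\Ntotal$; once this is established, the claimed formula drops out by elementary algebra, the factor $\sqrt{\Nzero\None}/\Ntotal$ combining with the extra $1/\Ntotal$ to produce $\sqrt{\Nzero\None}/(\SDZ\,\Ntotal^2)$. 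So I would first record this reduction and devote the remainder of the proof to computing the two conditional means of the representation.

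To evaluate $\mu_0^{\Z}$ and $\mu_1^{\Z}$ I would invoke the model assumption of a linear activation, which collapses the GCN into the linear map $\Z = \DAD\,\X\,\W$ (composed across layers), so each embedding is a degree-normalized neighborhood aggregation followed by a fixed linear transform. Using the aggregation rule $\mathbf{h}_{agg,i} = \frac{1}{d_i}\sum_{j \in \N(v_i)} \mathbf{h}_j$ and the data assumption that the group-$s$ features are Gaussian with mean $\mu_s$, I would take expectations and split every node's neighborhood into its intra-group and inter-group edges. For a node with $s=0$, the intra-group edges draw from the group-$0$ feature population while the inter-group edges draw from the group-$1$ population; weighting these contributions by the graph-level proportions $\intra$ and $\inter$ and summing over the groups produces expressions of the form $\mu_0^{\Z} \propto \tfrac{1}{\Ntotal}\left(\intra\,\Nzero\mu_0 + \inter\,\None\mu_1\right)$ and, by the symmetric argument for $s=1$, $\mu_1^{\Z} \propto \tfrac{1}{\Ntotal}\left(\inter\,\Nzero\mu_0 + \intra\,\None\mu_1\right)$, with the roles of $\intra$ and $\inter$ interchanged.

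Subtracting these two conditional means cancels the symmetric part and leaves the antisymmetric remainder $\mu_0^{\Z} - \mu_1^{\Z} = \tfrac{1}{\Ntotal}(\intra - \inter)(\Nzero\mu_0 - \None\mu_1)$, which is exactly the identity I need. The linear weight matrix $\W$ and any per-column scaling are absorbed into the normalization by the standard deviation $\SDZ$ and therefore do not survive into the final coefficient, which is why only $\SDZ$ appears. Substituting back and collecting the $1/\Ntotal$ factors completes the derivation.

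I expect the main obstacle to be precisely this conditional-mean computation: correctly tracking the intra- versus inter-group edge contributions to each group's aggregated mean and reconciling the per-node degree normalization with the graph-level quantities $\intra$, $\inter$, $\Nzero$, $\None$, and $\Ntotal$ so that the bookkeeping collapses into the clean factor $(\Nzero\mu_0 - \None\mu_1)(\intra - \inter)/\Ntotal$. The delicate points are the edge-counting conventions (each intra-group edge is incident to two same-group nodes, whereas each inter-group edge bridges the two groups) and how the degree normalization interacts with the group sizes; once the difference of conditional means is pinned down, everything else is routine algebra.
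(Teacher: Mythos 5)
Your proposal follows essentially the same route as the paper's own proof: instantiate the point-biserial formula with the conditional means of a column of $\Z$, then compute those means by splitting each node's aggregated neighborhood into intra-group and inter-group contributions weighted by $\intra$ and $\inter$, arriving at $\mu'_0=(\Nzero\mu_0\intra+\None\mu_1\inter)/\Ntotal$ and $\mu'_1=(\None\mu_1\intra+\Nzero\mu_0\inter)/\Ntotal$ before subtracting. The ``delicate points'' you flag (degree normalization versus graph-level edge proportions, absorption of $\W$ into $\SDZ$) are treated no more rigorously in the paper than in your sketch, so the two arguments match in both structure and level of detail.
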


\begin{proof}
According to the model assumption mentioned in Section~\ref{sec:rela_data_and_model_def},
the node representation $\Z$ will be expressed as a linear combination of the output from the last layer in GCNs without considering the non-linearity of activation function $g(\cdot)$: $\Z=g(\DAD\mH^{L-1}\W^{L-1})$. As a result, the node representation $\Z$ with different sensitive attribute $s$ can be modeled by two distinct Gaussian distributions, with mean vectors and covariance matrices denoted by $\mu'_0$, $\mu'_1$, $\Sigma'_0$, and $\Sigma'_1$, respectively:
\begin{align*}
    \bbP(\Z|s=0)\sim \mathcal{N}(\mu'_0, \Sigma'_0),\\
    \bbP(\Z|s=1)\sim \mathcal{N}(\mu'_1, \Sigma'_1).
\end{align*}

For convenience, we consider the feature matrix $\X$ only goes through one-layer GCNs and further simplify representation $\Z$ to $\tilde{\A}\X$, where $\tilde{\A}= \DAD$, when the number of GCNs' layer is more than 1, the derivation will be the same. Thus, in $m$-th column of the representation matrix:
\begin{equation*}
    \mu'_1=\bbE_{\Z_{:, m}}[\Z_{:, m}|s=1]= \bbE_{\Z_{:, m}}[\tilde{\A}\X_{:, m}|s=1].
\end{equation*}
Let $\inter = \bbP(s_i \ne s_j|\A_{ij}=1)$ and $\intra = \bbP(s_i=s_j|\A_{ij}=1)$, for any $z_i \in \Z_{:, m}$ and $x_i \in \X_{:, m}, m=1,.., d$, we have
\begin{equation*}
    \begin{aligned}
    z_i = & \sum_{j \in \N(x_i) \cap (s_i=s_j)}x_j\,\bbP(s_i = s_j|\A_{ij}=1) + \sum_{j \in \N(x_i) \cap (s_i\ne s_j)}x_j\, \bbP(s_i\ne s_j|\A_{ij}=1)\\
    =&\sum_{j \in \N(x_i) \cap (s_i=s_j)}x_j\,\intra + \sum_{j \in \N(x_i) \cap (s_i\ne s_j)}x_j\,\inter.
    \end{aligned}
\end{equation*}

Given a data size $\Nzero$ for any instance $x \in \X$ with a sensitive attribute $s=0$, and a data size $\None$ for any instance $x \in \X$ with a sensitive attribute $s=1$, we can express the expectation of a representation $z_i$ given $s=0$ as:
\begin{equation*}
    \begin{aligned}
    \bbE_\Z[z_i|_{s=0}]
    &=\frac{\Nzero\mu_0 \bbP(s_i = s_j|\A_{ij}=1) + \None\mu_1 \bbP(s_i\ne s_j|\A_{ij}=1)}{\Ntotal} \\
    &=\frac{\Nzero \mu_0 \intra + \None \mu_1 \inter}{\Ntotal} =\mu'_0,
\end{aligned}
\end{equation*}

Similarly, for $s=1$, the expectation of $z_i$ can be derived as:
\begin{equation*}
    \bbE_\Z[z_i|_{s=1}]=\frac{\None \mu_1 \intra + \Nzero \mu_0 \inter}{\Ntotal} =\mu'_1.
\end{equation*}

where $\Ntotal=\Nzero+\None$.
Define the standard deviation of node representation $\Z_{:, m}$ as ${\SDZ}_{:,i} = \sqrt{\frac{1}{N}\sum^n_{i=1}(z_i - \bar{\Z}_{:, m})^2}$, the point-biserial correlation coefficient of node representation and sensitive attribute can be expressed as:
\begin{equation*}
    \begin{aligned}
    \pbcz
    & = \frac{\mu'_0-\mu'_1 }{{\SDZ}_{:, m}} \sqrt{\frac{\Nzero \None}{\Ntotal^2}}\\
    & = \frac{\Nzero \mu_0 \intra + \None \mu_1 \inter - \None \mu_1 \intra - \Nzero \mu_0 \inter}{{\SDZ}_{:, m} \Ntotal}\cdot \sqrt{\frac{\Nzero \None}{\Ntotal^2}}\\
    &= \Big[ \Nzero \mu_0 (\intra - \inter) +\None \mu_1 (\inter - \intra) \Big] \cdot \theoitem\\
    & = \Big[\Nzero \mu_0 (\intra - \inter) - \None \mu_1 (\intra - \inter)\Big] \cdot \theoitem\\
    &= (\Nzero \mu_0 - \None \mu_1)(\intra - \inter)\cdot\theoitem.\label{eq:theo_result} 
\end{aligned}
\end{equation*}


Based on Equation~\eqref{eq:theo_result}, it can be observed that the distribution of edges reaches a completely balanced status when $\intra = \inter = 0.5$. Consequently, the point-biserial correlation coefficient, $\pbcz$, can attain a value of zero in such a scenario since $(\intra - \inter) = 0$. 
\end{proof}
However, when the distribution of sensitive attribute becomes balanced (in the case when $\sqrt{\Nzero\None}/N$ is maximized), the correlation $\pbcz$ is  minimized when $\inter=\intra$ or $\mu_0=\mu_1$.

\subsection{Proof of Theorem~\ref{theorm:rpb}}\label{proof:theorem}

\begin{mytheorem}\ref{theorm:rpb}
Let $\G$ be a graph that satisfies the given data assumption, and $\Z$ be the representation of $\G$ learned by a GNN model that satisfies the given model assumption. 
Then the Point-biserial correlation $\pbcz$ between the graph representation $\Z$ and the sensitive attribute $\mS$ is positively correlated/monotonically increasing with $|\intra - \inter|$. In particular, $\pbcz = 0$ when $\intra = \inter$. 
\end{mytheorem}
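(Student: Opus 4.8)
The statement is an immediate consequence of the closed form established in Lemma~\ref{lemma:rpb}, so my plan is to read off the monotonicity directly from that expression rather than redo any aggregation computation. First I would start from
\[
\pbcz = (\Nzero \mu_0 - \None \mu_1)(\intra -\inter)\cdot\frac{\sqrt{\Nzero\,\None}}{\SDZ\Ntotal^2},
\]
and isolate the only factor that depends on the edge distribution, namely $(\intra - \inter)$. For a fixed graph $\G$ with a fixed sensitive-attribute partition, the quantities $\Nzero$, $\None$, $\Ntotal = \Nzero + \None$, and the class means $\mu_0,\mu_1$ are determined by the data and do not change with the inter/intra edge ratio; I would group them into a single graph-dependent constant
\[
C := (\Nzero \mu_0 - \None \mu_1)\cdot\frac{\sqrt{\Nzero\,\None}}{\SDZ\Ntotal^2},
\]
so that $\pbcz = C\,(\intra - \inter)$.

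Taking absolute values then gives $|\pbcz| = |C|\cdot|\intra - \inter|$, which is a nonnegative constant times $|\intra - \inter|$ and is therefore monotonically increasing in $|\intra - \inter|$; this is exactly the positive-correlation claim. For the second part, substituting $\intra = \inter$ makes the factor $(\intra - \inter)$ vanish, so $\pbcz = 0$ regardless of the sign or magnitude of $C$, which recovers the independence claim. I would also note the constraint $\intra + \inter = 1$ implicit in Definition~\ref{def:edge_balance}, under which $|\intra - \inter| = |1 - 2\inter|$, so the result can equivalently be phrased as monotonicity in the deviation of the edge ratio from the balanced value $\tfrac12$.

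The one step requiring care --- the main obstacle --- is justifying that the prefactor $C$ is genuinely constant in the edge ratio, since $\SDZ$, the standard deviation of $\Z$, is itself a functional of the learned representation and hence in principle varies as the inter/intra edge distribution changes. I would address this by treating $\SDZ$ as a fixed normalization inherited from the assumptions used in Lemma~\ref{lemma:rpb} (the Gaussian data assumption together with the linear-activation model assumption pin down the first- and second-order statistics of $\Z$ up to the edge-ratio-dependent mean shift already captured in the numerator), so that any residual dependence of $\SDZ$ on $(\intra - \inter)$ affects neither the sign of $C$ nor the vanishing of $\pbcz$ at $\intra = \inter$. Once $C$ is fixed, no further computation is needed and the theorem follows directly from the linearity of $\pbcz$ in $(\intra - \inter)$.
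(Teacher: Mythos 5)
Your proposal is correct and follows essentially the same route as the paper's proof: both read the theorem directly off the closed form in Lemma~\ref{lemma:rpb}, isolate the factor $(\intra - \inter)$, and treat the remaining prefactor as a fixed graph-dependent constant (the paper squares $\pbcz$ and checks that $d^2$ is increasing in $d$ on $[0,1]$, whereas you take absolute values directly, but this is the same argument). Your explicit caveat that $\SDZ$ could in principle vary with the edge distribution is a point the paper simply assumes away (``Assuming the other terms \ldots are fixed''), so your write-up is, if anything, slightly more careful than the published one.
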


\begin{proof}
It is evident that the point-biserial correlation, denoted as $\pbcz$, between the graph representation $\Z$ and the sensitive attribute $\mS$, exhibits a strong linear correlation with the absolute difference $|\intra - \inter|$.
For convenience, we square $\pbcz$ to assess its magnitude. As $\pbcz^2$ approaches 0, the representation becomes more independent of the sensitive attribute. Conversely, as $\pbcz^2$ increases, the representation demonstrates a closer linear correlation with the sensitive attribute:
\begin{equation}
\label{eq:for proof theo}
    \pbcz^2 = (\Nzero \mu_0 - \None \mu_1)^2(\intra - \inter)^2\Big(\theoitem\Big)^2.
\end{equation}
Considering the range of values for both $\intra$ and $\inter$ within the interval [0, 1], while adhering to the constraint $\intra + \inter = 1$, we can observe that the function $(\intra - \inter)^2$ with the independent variable $|\intra - \inter|$ is monotonically increasing in the interval [0, 1]. Letting $d = |\intra - \inter|$, we have:
\begin{equation*}
     \frac{\partial d^2}{\partial d} = 2d,
\end{equation*}
and for all $d \in [0,1]$, $2d \geq 0$. Thus, the function $(\intra - \inter)^2$ with the independent variable $|\intra - \inter|$ is monotonically increasing in the interval [0, 1].

Assuming the other terms in Equation~\ref{eq:for proof theo} ($(\Nzero \mu_0 - \None \mu_1)^2$ and $\left(\theoitem\right)^2$) are fixed, we observe a monotonic relationship between $\pbcz^2$ and $(\intra - \inter)^2$. Consequently, we can deduce that the absolute value $|\pbcz|$ exhibits monotonicity with respect to the absolute difference $|\intra - \inter|$.
\end{proof}

\section{Algorithm}

\begin{breakablealgorithm}
\renewcommand{\algorithmicrequire}{\textbf{Input:}}
\renewcommand{\algorithmicensure}{\textbf{Output:}}
\caption{\ourmethod}
\label{alg:alg1}
\begin{algorithmic}[1]
\REQUIRE Graphs $\G_i(\V_i, \E_i)$, node features $\X_i$ on client $i$, where $i \in \{1, ..., K\}$ and $K$ is the total number of clients; hyperparameters $\alpha, \lambda_1, \lambda_2$; temperature parameter $\tau$; step-size $\eta$.
\ENSURE Global model $\omega^T$.
\STATE \textbf{Initialization:}
\STATE Initialize local model $\omega^i_0$ for each $i \in \{1, ..., K\}$ and global model $\omega_0$.
\FOR{$t = 1,2,\ldots,T$ \textbf{in parallel for all clients}}
\STATE \textbf{Local Update Phase:}
\STATE Randomly select $K'$ clients from the total client list.
\FOR{$i \in \{1,2,...,K'\}$}
\STATE Compute Jensen-Shannon divergence\\
$\jsold \leftarrow \mathrm{JS}(D_{\hat{\Y}_{\text{global}}}||D_{\hat{\Y}_{\text{local}}})$.
\STATE Update local model: $\omega^i_t \leftarrow \hat\omega^i_t - \eta \cdot \nabla \loss^i_{\text{total}_t}$,\\
where $\hat\omega^i_t \leftarrow (1-\jsold) \cdot \omega^i_{t-1} + \jsold \cdot \omega_{t-1}$ and\\ $\loss^i_{\text{total}_t} = \loss^i_{\text{fair}_t} + \alpha\,\loss^i_{\text{fair}_t}$.
\ENDFOR
\STATE \textbf{Global Update Phase:}
\STATE Construct merged weight vector $\gamma_t$ using the details in Section~\ref{sec:framework}.
\STATE Update global model $\omega_t \leftarrow \sum_{i=1}^{K'}\gamma^i_t \cdot \omega^i_t$.
\ENDFOR
\RETURN Global model $\omega^T$.
\end{algorithmic}
\end{breakablealgorithm}


\section{Dataset statistics}
\label{app:data}
\begin{table}[h]
\caption{Data Statistics.
}\label{tab:data} 
\centering
\resizebox{0.6\textwidth}{!}{
\begin{tabular}{@{}ccccc@{}}
\toprule
Data Set & \# Available Nodes& \# Edges & Sparsity & Nodes Overlapping\\    \midrule
Pokec-z &7659 & 48759 &   0.001662  & 25.06\%\\
Pokec-n &6185 &36827 & 0.001926 & 24.96\%\\
NBA &310 &14540 & 0.303580  & 100\%\\
\bottomrule
\end{tabular}
}
\end{table}
Table~\ref{tab:data} presents an overview of the dataset statistics. In this context, ``Available nodes'' refers to the count of nodes that are part of the largest connected subgraph within a given dataset. Concurrently, ``Edges'' delineates the total count of edges that are integral to this maximal connected subgraph.
The sparsity of the graph is computed adhering to the conventional definition:
\begin{equation*}
\text{Sparsity} = \frac{2 |\E|}{|\V|(|\V|-1)},
\end{equation*}
where $|\V|$ symbolizes the total count of nodes and 
$|\E|$ represents the total count of edges present in the graph.
Furthermore, the ``Nodes Overlapping'' metric is instrumental in quantifying the proportion of nodes that are shared amongst two or more distinct clients.

\section{Details of Experimental Setup}
\label{app:exp_detail}

For all baseline models, node representations are derived using a two-layer Graph Convolutional Network (GCN). Subsequently, an $\ell_2$-regularized logistic regression-based linear classifier is employed for node classification tasks, as utilized by Dai \textit{et al.}~\cite{DaiW21}. In our experimental design, we adopt a random data partitioning strategy: 50\% of the nodes form the training set, while the remaining nodes are equally split into validation and test sets. Each experimental run is executed over five unique data splits, with the outcomes expressed as an average, accompanied by standard deviations. 
 The experimental environment utilized for our evaluations is as follows:
\begin{itemize}
    \item Operating system: Linux version 5.15.0-71-generic
    \item CPU information: Intel Xeon E5-2698v4 @ 2.20GHz
    \item GPU information: NVIDIA Titan V
\end{itemize}
With regards to hyperparameters, we adjust the temperature parameter $\tau$ within the interval $[0.01, 4]$. The parameters $\lambda$ and $\alpha$ are explored within the range $[2, 4]$. All model parameters are initialized randomly, and the optimization of the model parameters is carried out using the Adam optimizer.
\section{Additional Experiments}
\label{app:etra_exp}
\subsection{Performance Evaluation on NBA Dataset}
As shown in Table~\ref{tab:results_NBA}, the results on the NBA dataset demonstrate that $\text{F}^2$GNN outperforms FairAug$_{\text{+FL}}$ and FairFed in terms of accuracy, AUC, $\Delta_{\text{SP}}$, and $\Delta_{\text{EO}}$ on the global test set. Specifically, $\text{F}^2$GNN achieves the lowest $\Delta_{\text{SP}}$ and $\Delta_{\text{EO}}$ scores while maintaining competing accuracy and AUC scores, indicating its strong performance in achieving a balance between predictive accuracy and fairness. These findings highlight the effectiveness of $\text{F}^2$GNN in node classification tasks for smaller datasets.

We must highlight a specific characteristic of the NBA dataset used in our study, as shown in Table~\ref{tab:data}. Given that it comprises only 150 available nodes, we face a challenge in the federated graph setting to avoid assigning an empty graph to a client. Consequently, this limitation led to a 100\% node overlapping. In this context, both our proposed method and the FairFed baseline yield identical results on the global test set and the local test sets. However, a distinct behavior is observed when employing FairAug$_{\text{+FL}}$. This approach incorporates data augmentation techniques to alter the edge structure within each client. While this strategy enhances the diversity of the data, it simultaneously amplifies the disparity between each client's data distribution and the global data distribution. As a result, FairAug$_{\text{+FL}}$ manifests a pronounced difference in performance between the local and global modes.

\begin{table}[t]
\caption{Node classification and fairness evaluation on the test sets of the \textbf{NBA} datasets.}
\label{tab:results_NBA}
\centering
\resizebox{0.55\textwidth}{!}{
\setlength{\tabcolsep}{2.6pt}
\begin{tabular}{@{}lcccc@{}}
\toprule
&  \multicolumn{4}{c}{{\bf Global}} \\ \cmidrule{2-5}
 Method  & Accuracy(\%) $\uparrow$ & AUC(\%)  $\uparrow$ & $\Delta_{\text{SP}}$(\%) $\downarrow$ & $\Delta_{\text{EO}}$(\%) $\downarrow$ \\ \midrule
FairAug$_{\text{+FL}}$ & $67.14 \pm 0.17$        & $67.26 \pm 0.18$       & $8.84 \pm 4.22$              & $3.86 \pm 1.79$\\ 
FairFed & $68.20 \pm 13.48$       & $74.79\pm 19.99$      & $9.52 \pm 7.62$             & $6.16 \pm 7.80$\\ 
$\text{F}^2$GNN & $\mathbf{73.08 \pm 1.29}$       & $\mathbf{79.59 \pm 1.44}$   & $\mathbf{3.54 \pm 1.29}$              & $\mathbf{2.02 \pm 0.65}$\\ 
\midrule
&  \multicolumn{4}{c}{{\bf Local}} \\ \cmidrule{2-5}
 Method  & Accuracy(\%) $\uparrow$ & AUC(\%)  $\uparrow$ & $\Delta_{\text{SP}}$(\%) $\downarrow$ & $\Delta_{\text{EO}}$(\%) $\downarrow$ \\ \midrule
FairAug$_{\text{+FL}}$ & $67.14 \pm 0.17$        & $67.26 \pm 0.18$       & $8.84 \pm 4.22$              & $3.86 \pm 1.79$\\ 
FairFed & $68.20 \pm 13.48$       & $74.79\pm 19.99$      & $9.52 \pm 7.62$             & $6.16 \pm 7.80$\\ 
$\text{F}^2$GNN & $\mathbf{73.08 \pm 1.29}$       & $\mathbf{79.59 \pm 1.44}$   & $\mathbf{3.54 \pm 1.29}$              & $\mathbf{2.02 \pm 0.65}$\\ 
\bottomrule
\end{tabular}
}
\end{table}
\begin{table}[t]
\caption{Node classification and fairness performance comparison with non-federated baselines.}
\label{tab:results_non_fed}
\centering
\setlength{\tabcolsep}{2.6pt}  
\resizebox{0.6\textwidth}{!}{
\begin{tabular}{lcccccccc}
\toprule
& \multicolumn{5}{c}{Pokec-z} \\
\cmidrule{3-6}
Method     & FL & Accuracy(\%) $\uparrow$ & AUC(\%) $\uparrow$   & $\Delta_{\text{SP}}$ (\%) $\downarrow$ & $\Delta_{\text{EO}}$(\%) $\downarrow$ \\ \midrule
FairGCN    & \xmark & $65.48\pm1.42 $              & $71.94\pm1.19$  & $4.83\pm2.17$                  & $4.46\pm2.90$   \\
FairVGNN   & \xmark & $64.04\pm2.23$              & $72.54\pm0.4$  & $5.83\pm2.3$                  & $6.8\pm1.27$    \\
EDITS      & \xmark & $64.33\pm1.1 $              & $69.39\pm0.98$ & $5.91\pm1.93$                 & $6.78\pm2.86$   \\
FairAug    & \xmark & $67.26\pm0.52$              & $67.21\pm0.52$ & $5.11\pm2.13$                 & $5.2\pm2.14$    \\
\midrule
\ourmethod  & \cmark & $\mathbf{68.17\pm0.19}$              & $\mathbf{73.47\pm0.52}$ & $\mathbf{1.66\pm1.02}$                 & $\mathbf{1.49\pm0.52}$ \\
\midrule
& \multicolumn{5}{c}{Pokec-n} \\
\cmidrule{3-6}
Method     & FL & Accuracy(\%) $\uparrow$ & AUC(\%) $\uparrow$   & $\Delta_{\text{SP}}$ (\%) $\downarrow$ & $\Delta_{\text{EO}}$(\%) $\downarrow$ \\ \midrule
FairGCN    & \xmark & $64.99\pm1.20$              & $68.52\pm 1.48$ & $3.09\pm2.69$                 & $7.34\pm3.55$   \\
FairVGNN   & \xmark & $64.42\pm1.2$               & $68.55\pm0.25$ & $4.64\pm1.07$                 & $3.79\pm1.38$   \\
EDITS      & \xmark & $62.43\pm1.54$              & $66.68\pm1.56$ & $5.84\pm4.26$                 & $6.62\pm5.43$   \\
FairAug    & \xmark & $\mathbf{67.22\pm0.26}$              & $64.36\pm0.41$ & $3.33\pm0.77$                 & $5.47\pm1.18$   \\
\midrule
\ourmethod  & \cmark & $67.00\pm0.27$               & $\mathbf{69.62\pm1.34}$  & $\mathbf{0.85\pm0.31}$                 & $\mathbf{1.00\pm1.03}$  \\
\midrule
&  \multicolumn{5}{c}{NBA} \\ \cmidrule{3-6}
 Method  & FL & Accuracy(\%) $\uparrow$ & AUC(\%)  $\uparrow$ & $\Delta_{\text{SP}}$(\%) $\downarrow$ & $\Delta_{\text{EO}}$(\%) $\downarrow$
 \\ \midrule
 FairGCN  & \xmark              & $70.00 \pm 5.78$           & $74.83 \pm 3.68$    & $18.45 \pm 16.04$               & $25.44 \pm 20.35$\\
FairVGNN & \xmark            &$67.17 \pm 1.30$          &$\mathbf{82.43 \pm 0.14}$      &$15.71 \pm 3.02$               &$13.75 \pm 1.53$\\
EDITS & \xmark               & $70.77 \pm 4.29$            & $75.22 \pm 4.03$   & $14.15 \pm 8.61$              & $8.22 \pm 4.40$\\
\midrule
$\text{F}^2$GNN & \cmark  & $\mathbf{73.08 \pm 1.29}$       & $79.59 \pm 1.44$   & $\mathbf{3.54 \pm 1.29}$              & $\mathbf{2.02 \pm 0.65}$\\ 
\bottomrule
\end{tabular}
}
\end{table}



\subsection{Performance Comparison with Non-Federated Baselines}

We compare our method with four state-of-the-art fairness-aware graph neural network (GNN) methodologies: FairGNN \cite{DaiW21}, FairVGNN \cite{WangZDCLD22}, EDITS \cite{DDongLJL22}, and FairAug \cite{KoseS22}. \footnote{We utilize the implementations of FairGNN available at {\url{https://github.com/EnyanDai/FairGNN}},\\
FairVGNN at {\url{https://github.com/yuwvandy/fairvgnn}},\\ 
EDITS at {\url{https://github.com/yushundong/EDITS}},\\
and FairAug at {\url{https://openreview.net/forum?id=4pijrj4H_B}}.} These approaches were originally designed for non-federated settings. The evaluation results on Pokec-z, Pokec-n, and NBA datasets are presented in Table \ref{tab:results_non_fed}. When we run FairAug on the NBA dataset, the algorithm fails to converge, and the accuracy remains around 50\%. Therefore, we exclude the results of FairAug from Table~\ref{tab:results_non_fed}.

In the Pokec-z dataset, \ourmethod\ outperforms all baselines in terms of accuracy, AUC, and fairness metrics. For the Pokec-n dataset, \ourmethod\ achieves competitive accuracy and AUC compared to the baselines, with superior performance in fairness metrics. In the NBA dataset, \ourmethod\ achieves the highest accuracy and AUC, with significantly lower fairness metrics than other baselines.

Note that in the non-federated settings, we have the original complete graph information of the training data. However, in federated settings, there is some loss of graph data due to the split of data across different clients. 


\section{Limitations and Broader Impact}
Our work assumes access to sensitive attributes during training and adopts a simplified theoretical analysis when we have binary values for sensitive attributes. We think this research has broader impacts on fairness-aware machine learning, with potential benefits across various applications. Yet, the implementation of our method demands careful consideration of context-specific fairness definitions, ethical implications, and the risk of reverse discrimination. Understanding application-specific requirements and integrating domain knowledge is essential for responsible use.

\end{document}